\newcommand{\nc}[0]{NC$^1$\xspace}
\newcommand{\tc}[0]{TC$^0$\xspace}
\newcommand{\ac}[0]{AC$^0$\xspace}
\newcommand{\AND}[0]{\texttt{AND}}
\newcommand{\OR}[0]{\texttt{OR}}
\newcommand{\NOT}[0]{\texttt{NOT}}
\newcommand{\MAJ}[0]{\texttt{MAJORITY}}
\newcommand{\func}[1]{\left( #1 \right)}
\newcommand{\step}[1]{^{(#1)}}
\newcommand{\R}[1]{\mathbb{R}^{#1}}
\newcommand{\fp}{^{*}}
\newcommand{\iter}[1]{^{\left( #1 \right)}}
\newcommand{\loss}{\mathcal{L}}
\newcommand{\partialdiff}[3]{\left. \frac{\partial #1}{\partial #2}\right|_{#3}}
\newcommand{\distribution}[1]{$\mathcal{D}_{#1}$}
\theoremstyle{plain}
\newtheorem{theorem}{Theorem}
\theoremstyle{definition}
\newtheorem{definition}[theorem]{Definition}
\theoremstyle{remark}
\newtheorem{hypothesis}{Hypothesis}
\begin{document}

\twocolumn[
\icmltitle{Implicit Language Models are RNNs: Balancing Parallelization and Expressivity}
%



\icmlsetsymbol{equal}{*}

\begin{icmlauthorlist}
\icmlauthor{Mark Schöne}{equal,tud,msr}
\icmlauthor{Babak Rahmani}{equal,msr}
\icmlauthor{Heiner Kremer}{msr}
\icmlauthor{Fabian Falck}{msr}
\icmlauthor{Hitesh Ballani}{msr}
\icmlauthor{Jannes Gladrow}{msr}
\end{icmlauthorlist}

\icmlaffiliation{tud}{Chair of Highly-Parallel VLSI Systems and Neuro-Microelectronics , TUD Dresden University of Technology, Dresden, Germany}
\icmlaffiliation{msr}{Microsoft Research, Cambridge, United Kingdom}

\icmlcorrespondingauthor{Jannes Gladrow}{jannes.gladrow@microsoft.com}

\icmlkeywords{Machine Learning, ICML}

\vskip 0.3in
]



\printAffiliationsAndNotice{\icmlEqualContribution} 


\begin{abstract}
State-space models (SSMs) and transformers dominate the language modeling landscape. However, they are constrained to a lower computational complexity than classical recurrent neural networks (RNNs), limiting their expressivity. 
In contrast, RNNs lack parallelization during training, raising fundamental questions about the trade off between parallelization and expressivity.
We propose \emph{implicit SSMs}, which iterate a transformation until convergence to a fixed point.
Theoretically, we show that implicit SSMs implement the non-linear state-transitions of RNNs.
Empirically, we find that only approximate fixed-point convergence suffices, enabling the design of a scalable training curriculum that largely retains parallelization, with full convergence required only for a small subset of tokens.
Our approach demonstrates superior state-tracking capabilities on regular languages, surpassing transformers and SSMs.
We further scale implicit SSMs to natural language reasoning tasks and pretraining of large-scale language models up to 1.3B parameters on 207B tokens--representing, to our knowledge, the largest implicit model trained to date.
Notably, our implicit models outperform their explicit counterparts on standard benchmarks.
Our code is publicly available at \url{github.com/microsoft/implicit_languagemodels}.
\end{abstract}
\section{Introduction}
\begin{figure}[!htb]
    \centering
    \includegraphics[width=\linewidth]{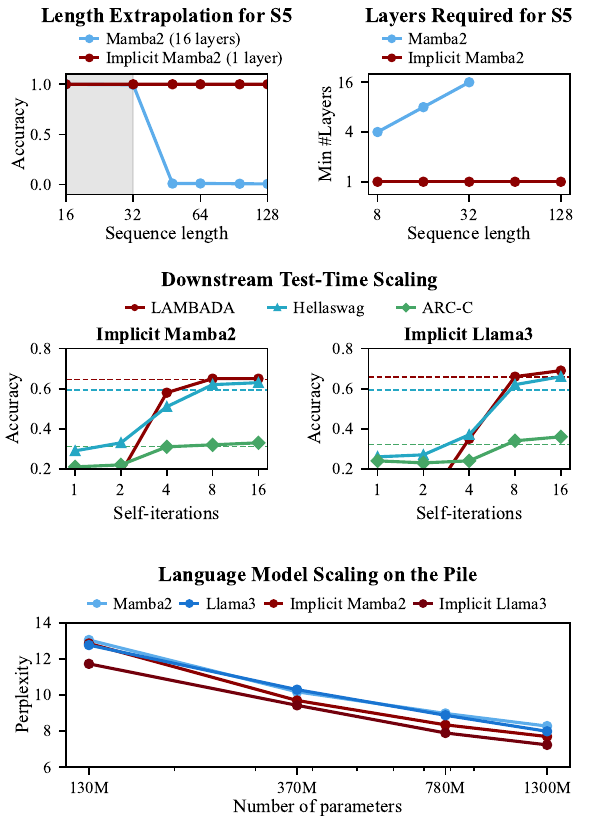}
    \caption{
        \textbf{Top Left:}~Minimum layers required to solve the $S_5$ word problem, a theoretically hard formalization of state tracking, for different sequence lengths.
        \textbf{Top Right:}~Length generalization for Mamba2 and our implicit Mamba2 trained on $L=32$ and extrapolated up to $L=128$.
        \textbf{Center:}~Downstream task accuracy for a range of self-iterations at test-time for our implicit $1.3$B parameter Mamba2 and Llama3 models. Dashed lines indicate explicit models' accuracy.
        \textbf{Bottom:}~Scaling of language models pretrained on $207$B tokens of the deduplicated \textsc{Pile}.
    }
    \label{fig:opener}
\end{figure}

Transformers, despite their dominance on contemporary language benchmarks, exhibit fundamental limitations in computational expressiveness. Both theoretically and empirically, they cannot fully recognize regular languages~\cite{bhattamishra2020ability} or, equivalently, represent finite state machines (FSMs) ~\citep{merrill2022saturated}. This limitation is significant because FSMs form the backbone of many real-world state-tracking problems, including evaluating code, tracking object permutations (e.g., in games like chess or structured narratives), and modeling sequential dependencies in logic~\cite{li2021implicit}, location tracking~\cite{guan2023leveraging}, games~\cite{li2023emergent} and scientific applications such as protein generation, genetics, and chemistry~\cite{briand2023dna, chowdhury2022single, boiko2023autonomous}. This raises questions about the ability of transformers to maintain coherent world models based on transitions between states~\cite{vafa2024evaluating} and hence, their suitability for tasks requiring robust state-tracking. These shortcomings appear to stem from a fundamental trade-off between parallelizability at training time and the ability to track state~\cite{Merrill_Sabharwal_2023}.

Surprisingly, recently emerging state-space models (SSM), a class of \emph{linear} recurrent neural networks, are bound by the same trade-off: despite their seemingly sequential nature they cannot express some inherently sequential problems such as certain regular languages~\cite{merrill2024illusion}.
In contrast, non-linear recurrent neural networks (RNNs) 
are not bound by these restrictions on compute complexity and can track state~\cite{siegelmann1992,merrill2019sequential} but lack parallelizability at scale.
This raises the question:
\textit{How much sequential processing does one have to accept to solve the state tracking problem?}

Previous attempts to address these limitations in transformers have leveraged non-linear transitions through self-iteration in the depth dimension~\cite{dehghani2018universal, Banino_Balaguer_Blundell_2021}.
However, backpropagation through unrolled networks is computationally prohibitive at scale. 
Deep equilibrium (DEQ) models~\cite{Bai_Kolter_Koltun_2019}, in contrast, define a function \emph{implicitly} via the fixed-points of a neural network; their output is the result of self-iteration until convergence. 
Training such networks requires backpropagation solely at the fixed point, eliminating the need to traverse the iterative path and thereby decoupling memory usage from the depth of iterations.
Emerging hardware promising rapid computation of fixed-points of neural networks~\cite{photonics_roadmap} may tilt the hardware lottery~\cite{Hooker_2020} in favor of such implicit models, making this an opportune moment to explore their potential.

Our approach to balancing state tracking and parallelization relies on two key observations. 
First, we demonstrate that implicit models naturally adapt their compute load to the difficulty of the learning problem (see \Cref{fig:word-problem}\textbf{Left}).
At both training and test time, such models effectively interpolate between their parallelizable form, when all tokens in the sequence are resolvable, and RNNs, when there are no resolvable tokens. Further, we show theoretically that implicit models have indeed non-linear token-to-token transitions similar to RNNs. Second, based on the success of transformers on many practical language modeling problems, we hypothesize that natural language contains only a sparse set of tokens that cannot be resolved by transformers (and SSMs).
Such {non-solvable} transitions are critical for state tracking but remain intractable for the class of circuits representable by transformers and SSMs~\citep{merrill2022saturated,merrill2024illusion}.
Exploiting these properties, we devise implicit models that combine the expressive power of RNNs with the parallelizability of transformers and SSMs (see \Cref{fig:duality}).
In contrast to conventional transformers and SSMs, implicit models can track state, even out-of-distribution (see \Cref{fig:opener}\textbf{Right}).
In contrast to RNNs, these models permit a much larger degree of parallelization as the depth of self-iteration is much smaller than the sequence length (see \Cref{fig:word-problem}\textbf{Mid}).

\par\noindent
\textbf{Contributions.} 
\textbf{(a)} We propose implicit SSMs and show theoretically that they represent non-linear and non-diagonal state-to-state transitions similar to RNNs.
\textbf{(b)} We confirm empirically that implicit SSMs can solve the $S_5$ word problem, which conventional SSMs and transformers fail to solve.
\textbf{(c)} We show by constructing distributions with varying difficulty level over the word problem that implicit SSMs as well as transformers require much fewer non-parallelizable transitions to learn word problems than RNNs
\textbf{(d)} We demonstrate scalability of implicit models through a carefully chosen training curriculum that bounds the number of iterations, training implicit SSM and transformers up to 1.3B parameters on 207B tokens of the deduplicated \textsc{Pile} (\textsc{D-Pile})~\cite{gao2020pile}--- see \Cref{fig:opener}\textbf{Bottom}, the largest self-iterated model with dynamic halting condition to date, to the best of our knowledge.
\textbf{(e)} We highlight a set of properties of our pretrained implicit language models such as favorable length generalization, and path-independent auto-regressive generation.

\section{Background}
\label{sec:Background}

\subsection{State-Space Models}
\label{sec:State-space Models}
SSMs are linear recurrent models which produce an output $y_t \in \mathbb{R}^{d_\text{out}}$ given an input $x_t \in \mathbb{R}^{d_\text{in}}$ and a sequentially updated hidden state vector $h_t \in \mathbb{R}^n$ via the recurrence
\begin{align}
    h_t &= \Lambda(x_t) h_{t-1} + u(x_t) \label{eq:linear-ssm}\\
    y_t &= f(h_{t-1}, x_t), \label{eq:ssm-forward}
\end{align}
where $u$ and $f$ are possibly non-linear learned functions. 
The learned matrix $\Lambda \in \mathbb{R}^{n \times n}$ is typically diagonal and can be constant \cite{gu2022efficiently, smith2023simplified} or an input-dependent matrix-valued function \citep{qin2023hierarchically, gu2023mamba, dao2024transformers}. 
A SSM combines a number of these blocks with non-linear feed-forward blocks. 
In contrast to non-linear RNNs, the linear state recurrence \eqref{eq:linear-ssm} allows for training parallelism along the sequence dimension,
and avoids the quadratic scaling of self-attention.

\subsection{Limitations of Transformers and SSMs}
\label{sec:limitations-transformer-ssm}
Efficient parallelization is one of the central features enabling transformers and SSMs to scale to large machine learning problems such as language modeling.
Parallel circuits, however, face fundamental trade-offs regarding the class of problems that they can address. 
In particular, transformers and SSMs theoretically fail to recognize certain regular languages, or equivalently, to simulate FSMs~\citep{merrill2022saturated,merrill2024illusion}.
Empirical studies have confirmed that neither of the models are capable of learning the algorithms constituting certain regular languages~\cite{bhattamishra2020ability,sarrof2024expressive}.
By contrast, the sequential nature of RNNs allows them to express all regular languages~\citep{merrill2019sequential}.
A detailed discussion is given in \Cref{sec:circuit-complexity}.

\subsection{Deep Equilibrium Models}
\label{sec:deq}
Most deep learning architectures \textit{explicitly} parametrize a function $x \mapsto y$ with a neural network.
Deep Equilibrium Models (DEQ), in contrast, define a function \textit{implicitly} via the fixed-points of 
an input-conditional neural network, i.e., 
\begin{align}
    z^\ast = F_\theta(z^\ast, x) \label{eq:deq},
\end{align}
where $z^\ast$ is identified with the prediction $y$.
Naively differentiating a loss function $\mathcal{L}(z^\ast)$ with respect to the model parameters $\theta$ generally requires a costly differentiation through the employed fixed-point solver. Instead, to allow for gradient computations with a constant memory footprint, 
DEQs utilize the Implicit Function Theorem:

Let $G_\theta(z, x) = z - F_\theta(z,x)$.
If the Jacobian $J_{G, z}$ of $G$ w.r.t.\ $z$ is non-singular in $z^\ast$,
then there exists an open set $U$ around $(x, \theta)$ and a unique function $\Phi$ on $U$ such that $\Phi(x,\theta) = z^\ast$ and $G(\Phi(\tilde{x}, \tilde{\theta}), \tilde{x}, \tilde{\theta}) = 0$ for all $(\tilde{x}, \tilde{\theta})\in U$. 
Furthermore, the derivative of $\Phi$ w.r.t. $\theta$ is given by
\begin{align}
    \frac{\partial \Phi}{\partial \theta} = - J_{G, z^\ast}^{-1} \frac{\partial F_\theta}{\partial \theta}.
    \label{eq:implicit-diff}
\end{align}
A range of methods have been proposed to efficiently compute $\frac{\partial \loss}{\partial \theta} = \frac{\partial \Phi}{\partial \theta} \frac{\partial \mathcal{L}}{\partial z^\ast}$ using \Cref{eq:implicit-diff} \citep{Bai_Kolter_Koltun_2019, geng2021training}.
Here, we employ the Phantom Gradient approach of \citet{geng2021training} (see in the Appendix~\Cref{fig:deq}). The method is based on solving a smoothed version of the fixed point \Cref{eq:deq} combined with a finite truncation of the von Neumann series of the Jacobian-vector-product in \eqref{eq:implicit-diff} given as
\begin{align}
    \widehat{\frac{\partial \Phi}{\partial \theta}} = \lambda \frac{\partial F_\theta}{\partial \theta} \Big{|}_{z^\ast} \sum_{i=0}^{k-1} \left( \lambda \frac{\partial F_\theta}{\partial z}\Big{|}_{z^\ast} + (1-\lambda) I \right)^i, \label{eq:phantom-gradient}
\end{align}
where a small smoothing parameter $\lambda \in (0,1]$ helps maintaining a small condition number at the cost of increased fixed-point iterations and the truncation length $k$ determines the accuracy of the approximation.

\section{Implicit Sequence Models}
\label{sec:implicit-models}
\subsection{Implicit State-space Models}
\label{sec:Implicit State-space Models}
\begin{figure*}[!htb]
    \centering
    \includegraphics[width=\linewidth]{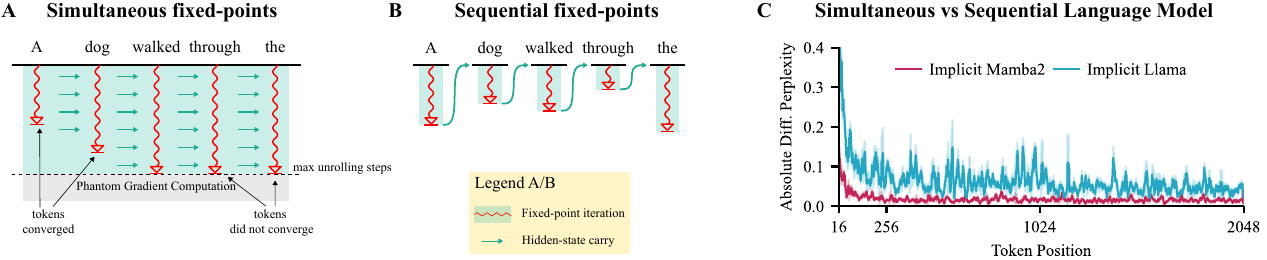}
    \caption{
        \textbf{A:} The simultaneous mode self-iterates the entire sequence such that trajectories interact during convergence. It exploits the parallelism of the backbone model.
        \textbf{B:} The sequential mode iterates each token individually. Only converged hidden states or kv-caches are passed on. This mode is used for generation.
        \textbf{C:} Difference in perplexity between the two modes for our 1.3B implicit models.
    }
    \label{fig:duality}
\end{figure*}
The linear recurrence of SSMs shown in \Cref{eq:linear-ssm} cannot resolve elaborate sequential problems~\cite{merrill2024illusion}.
Here, we propose to exploit self-iterations along the depth of neural networks to close the expressivity gap between SSMs and RNNs.
Following the DEQ paradigm (see \Cref{sec:deq}), we implicitly define a model via the fixed points of a SSM.
Introducing the iteration variable $z_t\step{s}\in\R{d_\text{out}}$ to \Cref{eq:linear-ssm,eq:ssm-forward} yields the fixed point iteration
\begin{align}
    h_t\iter{s} &= \Lambda\func{z_t\iter{s-1}, x_t} h_{t-1}\iter{s} + u\func{z_t\iter{s-1}, x_t} \label{eq:implicit-ssm-recurrence}\\
    z_t\iter{s} &= f_\theta\func{z_t\iter{s - 1}, h_{t-1}\iter{s}, x_t} \,, \label{eq:implicit-ssm-block}
\end{align}
where ${z_t\step{0} = 0}$ for ${t=0,\dots,T}$ and ${h_0\step{s} = 0}$ for ${s=0,\dots,S}$ respectively.
In practice, we rely on the Mamba2 \cite{dao2024transformers} architecture and inject the expanded input $x_t$ additively after the initial projection.

Computing the fixed-points and the gradient around the fixed points of \Cref{eq:implicit-ssm-recurrence,eq:implicit-ssm-block} requires to iterate the two loops $t=1,\dots,T$ along the input sequence and $s=0,\dots,S$ to converge to fixed points along depth.
\Cref{fig:duality} visualizes how these two loops give rise to two distinct modes of evaluation.
The \textit{simultaneous mode} simultaneously finds the fixed points for all $t$ (see \Cref{fig:duality}\textbf{A}),
and exploits parallelization strategies for SSMs~\cite{dao2024transformers}.
The \textit{sequential mode} resolves the $s$ and $t$ loops in the transpose order, 
and processes sequences sequentially just like classical SSMs or RNNs (see \Cref{fig:duality}\textbf{B}).
While the simultaneous mode allows for highly parallel training, the sequential mode enables efficient inference at constant memory, e.g. for language generation.
The emergence of these two strategies for finding the fixed points sparks the question if they are interchangeable as visualized in \Cref{fig:exchange-for-loops}.
Can a model be trained in the simultaneous mode and deployed for inference in the sequential mode?
This question is answered in \Cref{sec:pretraining-deq} and \Cref{fig:duality} for real language models.

For both modes,
\Cref{eq:implicit-ssm-recurrence,eq:implicit-ssm-block} in the limit ${s\rightarrow\infty}$ read
\begin{align}
    h_t\fp &= \Lambda\func{z_t\fp, x_t} h_{t-1}\fp + u\func{z_t\fp, x_t} \label{eq:implicit-ssm-state-fixed-points}\\
    z_t\fp &= f_\theta\func{z_t\fp, h_{t-1}\fp, x_t} \,, \label{eq:implicit-ssm-block-fixed-points}
\end{align}
where ${z_t\fp = \lim_{s\rightarrow\infty}z_t\step{s}}$ and ${h_t\fp = \lim_{s\rightarrow\infty}h_t\step{s}}$ denote the fixed points.
\Cref{eq:implicit-ssm-state-fixed-points,eq:implicit-ssm-block-fixed-points} couple the outputs $z_t\fp$ with the states $h_t\fp$ via the non-linear functions $f_\theta,\Lambda$ and $u$.
Notably, what appears as a minor technical modification of the original state-space model in \Cref{eq:linear-ssm,eq:ssm-forward} leads to a fundamentally enhanced expressivity of the recurrent model.

\begin{theorem}
    An implicit SSM defined by \Cref{eq:implicit-ssm-state-fixed-points,eq:implicit-ssm-block-fixed-points} with generic weights yields a non-linear and non-diagonal state-to-state transition function
    $h_{t-1}\fp \mapsto h_t\fp$.    
    If further ${z_t\fp - f_\theta\func{z_t\fp, h_{t-1}\fp, x_t} = 0}$ is non-singular w.r.t. $z_t\fp$,
    then there exists a continuously differentiable function $\varphi$ such that $z_t\fp = \varphi\func{h_t\fp, x_t, \theta}$.
    In this case, the state-to-state Jacobian is given by
            \begin{align}
            \left. \frac{\mathrm{d}h_t\fp}{\mathrm{d}h_{t-1}\fp}\right|_{z_t\fp, x_t, \theta}
            = \Lambda\func{z_t\fp, x_t}
            & + \frac{\partial \Lambda}{\partial z_t\fp} \frac{\partial \varphi}{\partial h_{t-1}\fp}\mathrm{diag}\func{h_{t-1}\fp} \nonumber\\
            & + \frac{\partial u}{\partial z_t\fp} \frac{\partial \varphi}{\partial h_{t-1}\fp} \,.
            \label{eq:implicit-ssm-jacobian}
        \end{align}
    \label{thm:non-linear}
\end{theorem}

{\textit{Proof:} 
    We refer the reader to \Cref{sec:proof-theorem} for the proof and an extended statement of the theorem.
    A numerical check of \Cref{eq:implicit-ssm-jacobian} is provided in \Cref{fig:jacobian-check}.
    \qed
}

As discussed in \Cref{sec:limitations-transformer-ssm}, non-linear RNNs surpass transformers and linear SSMs in terms of circuit complexity.
By the above construction, our implicit SSM exhibits the favourable computational properties of RNNs, lifting the illusion of state in linear SSMs ~\cite{merrill2024illusion}.
Furthermore, the gradients of a fixed point iteration depend solely on the fixed point, and not on the path to the fixed point, by the implicit function theorem.
This suggests that both modes resolving the two for loops yield functionally equivalent fixed points.

These properties raise the following hypotheses, which we will investigate empirically in this work.
\begin{hypothesis}[Expressivity]
    Implicit SSMs can learn and express all regular languages. 
\end{hypothesis}
\begin{hypothesis}[Parallelization]
    Implicit SSMs can be trained in simultaneous mode and evaluated in sequential mode without loss in performance.
\end{hypothesis}

\subsection{Implicit Transformers}
Similar to implicit SSMs, one can define an implicit transformer model~\citep{Bai_Kolter_Koltun_2019} by injecting the input $x_t$ into the attention operator as $\operatorname{Attn}(z_t W_{QKV} + x_t W_{\text{inp}})$, 
where $W_{QKV} \in \mathbb{R}^{d \times 3d}$ produces the $Q$, $K$, $V$ for the multi-head self-attention (Attn), and $W_{\text{inp}} \in \mathbb{R}^{d \times 3d}$ is the input projection.
Conventional transformers, with their finite number of layers, cannot learn certain formal languages outside of the \tc circuit complexity class~\citep{merrill2022saturated, strobl2024formal}. 
However, chain of thought (CoT) models~\citep{wei2022chain} bypass this restriction by using an adaptive compute budget through recursive generation of intermediate tokens~\cite{merrillexpressive}. 
Implicit transformers~\cite{Bai_Kolter_Koltun_2019} utilize an adaptive compute budget differently, using fixed-point iterations that can be interpreted as sequences of latent thoughts~\citep{hao2024traininglargelanguagemodels}, undergoing non-linear updates similar to a non-linear RNN's hidden state. 
\section{Implicit SSMs Adapt to Hard Languages}
\label{sec:word-problem}
\paragraph{Implicit SSMs Lift the Illusion of State}
%
The illusion of state~\cite{merrill2024illusion} reveals that constant depth SSMs cannot simulate arbitrary finite state machines.
A hard state tracking problem in the sense that all state tracking problems can be reduced to it is given by the \textit{word problem} for the symmetric group $S_5$~\cite{barrington1989nc1}.
The word problem for a monoid $\func{M, \circ}$ is to resolve arbitrary length products of the form ${\hat{m} = m_1\cdot m_2\circ \dots\circ m_k}$ for ${m_1, m_2, \dots, m_k\in M, k\in\mathbb{N}}$.
A comprehensive introduction to the word problem and our particular learning setting is provided in \Cref{sec:appendix-word-problem}.

We train a set of Mamba2 SSMs~\cite{dao2024transformers} to reproduce the results of \citet{merrill2024illusion}.
\Cref{fig:opener}\textbf{Left} highlights that Mamba2 requires more layers as the sequences get longer.
For example resolving sequences of \num{32} elements from $S_5$ requires a minimum of \num{16} layers.
Extending the result of \citet{merrill2024illusion}, \Cref{fig:opener}\textbf{Right} shows that the same Mamba2 model with 16 layers does not generalize beyond the training distribution when evaluated on sequences longer than \num{32} elements.
Our implicit Mamba2, however, can utilize additional self-iterations at test-time to resolve longer sequences of up to \num{128} elements.
This result establishes that implicit SSMs effectively learn to be RNNs. 
However, with naive unrolling in implicit SSMs, parallelization would still be challenging.
In the following, we show a subtle yet important result: Implicit SSMs can adapt to word problems of varying difficulty even when trained with bounded depth.
\paragraph{Languages with Sparse Non-Solvable Transitions}
SSMs excel in natural language processing tasks despite being theoretically constrained to the simple class of star-free formal languages~\cite{sarrof2024expressive}.
We conject that natural language is mostly composed of simple to comprehend tokens, while harder tokens appear only sparsely.
To study implicit models in a controlled learning environment closer to natural language than the $S_5$ word problem, we construct a word problem that mixes simple and hard examples.
Let $M = M^{a} \times G$ be a direct product of an aperiodic monoid $M^{a}$ and a non-solvable group $G$.
A sequence $m_0, \dots, m_T$ is sampled from $M$ with replacement.
To control the number of hard examples and simple examples, we define a family of distributions \distribution{p} over $M$ as follows.
An element $m^{a}_k\in M^{a}$ is sampled uniformly at each step $k$, representing the presence of simple examples.
On the other hand, we sample elements $g_k\in G\setminus \{e\}$ from $G$ without the identify transformation, each with probability $\frac{p}{\left|G\right|-1}$. 
The identity element $g_k=e\in G$ is sampled with probability $1-p$.
The resulting transformations $\func{m^{a}_k, g_k}$ are aperiodic at least when $g_k = e$, i.e. with probability $1-p$.
\paragraph{Interpolating between SSMs and RNNs}
We will identify minimally sequential models that parallelize to a high degree and still capture all non-solvable transitions in a language.
Therefore, we apply our construction of a word problem above to mix tokens from simple languages with tokens from non-solvable hard languages.
This section studies a word problem over $M=M^{a}\times A5$, where $M^{a}$ is a simple aperiodic monoid with four elements and $A_5\subset S_5$ is the alternating group over 5 elements, the smallest non-solvable subgroup of $S_5$.
For details on the learning problem, we refer the reader to \Cref{sec:appendix-word-problem}.

We train Mamba2 and implicit Mamba2 models on a range of mixtures of simple and hard tokens between $p=0.0$ and $p=0.25$, 
and in the case of the implicit models with varying self-iteration depths at training time between \num{2} and \num{128}.
All training sequences sample $L=256$ tokens, and evaluation is conducted on the distribution \distribution{0.5}, where half of the tokens is hard.
The evaluation is hence an out-of-distribution (OOD) setting.
We report averaged results over 10 random seeds with boostrapped \SI{95}{\percent} confidence intervals as well as the best models per configuration.
None of the conventional models got OOD accuracies beyond random chance as shown in the right panel of \Cref{fig:word-problem}, hence we will focus our discussion on the implicit models in the following.
The left panel of \Cref{fig:word-problem} shows that implicit SSMs capture the underlying algorithm, as measured by out-of-distribution evaluation with $p=0.5$, even when trained on very few non-solvable tokens.
While a fraction of \SI{2}{\percent} hard tokens per sample ($p=0.02$) suffices for some configurations, reliable training can be observed from $p=0.1$ on.
\begin{figure*}
    \centering
    \includegraphics[width=\textwidth]{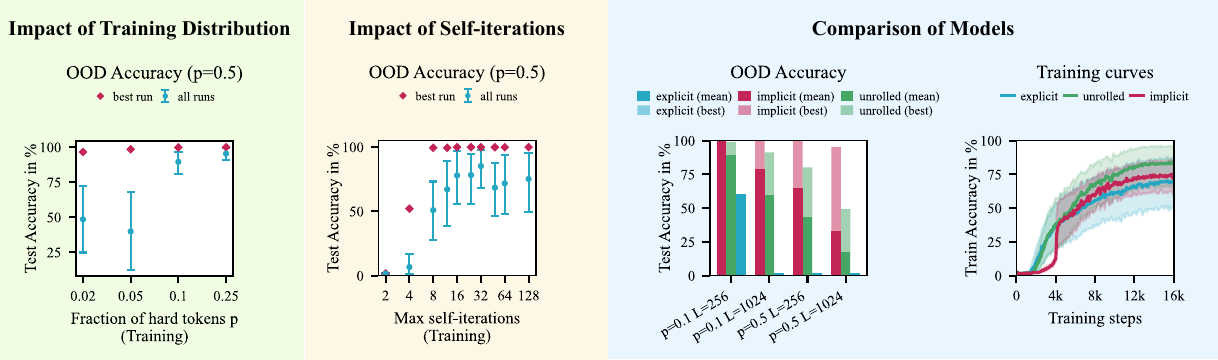}
    \caption{
        All models were trained and evaluated on sequences of length $L=256$.
        The out-of-distribution (OOD) evaluation is conducted with $p=50\%$.
        \textbf{Left:}
        Comparison of OOD accuracy for a range of training distributions with hard token probabilities $p$.
        \textbf{Mid:}
        Comparison of OOD accuracy for a range of self-iterations caps at training time, trained with $p=0.1$.
        \textbf{Right:}
        Comparison of implicit Mamba2, unrolled Mamba2, and Mamba2 trained with $p=0.1$. 
        Unrolled Mamba2 unrolls a single layer with full backpropagation, while implicit Mamba2 receives only \num{4} Phantom Gradient steps. 
        All models have a training depth of 16 (layers for Mamba2, self-iterations for implicit and unrolled).
        }
    \label{fig:word-problem}
\end{figure*}

We are left with the question of how many self-iterations are required during training to learn the algorithm intrinsic to the word problem. 
To answer this we trained a range of models with $p=0.1$, setting a different upper bound on the number of self-iterations at training time.
The number of self-iterations at test time is unbounded and solely defined by the fixed point iteration.
The mid panel of \Cref{fig:word-problem} shows that a small amount of down to \num{8} self-iterations at training time suffices to generalize from the distribution \distribution{0.1} at training time to \distribution{0.5} at test time.
Interestingly, the number of test time self-iterations is quite similar for the models trained with different upper bounds on the training time self-iterations,
hinting that the models learned similar algorithms.
Note that the self-iterations required during training are significantly lower than the sequence length.
For comparison, a conventional RNN conducts $L=256$ non-parallelizable steps to solve the same problem, a factor of \num{32} larger than the \num{8} self-iterations required by our implicit Mamba2.
This comes at a cost: we need to self-iterate over every token.
However, each self-iteration can be parallelized across the sequence dimension by the parallelization of the base model.

In the right panel of \Cref{fig:word-problem}, we demonstrate that the phantom gradient is, in most cases, a more effective method for gradient computation than backpropagation through the entire sequence of unrolling steps. 
To evaluate this, we train three variants of the Mamba2 model: 
(1) an implicit Mamba2, which self-iterates and employs phantom gradients; 
(2) an unrolled Mamba2, which backpropagates through all unrolling steps; and 
(3) an explicit Mamba2, a conventional model. 
All models are trained on sequences of length $L=256$ sampled from \distribution{0.1}, with a depth constraint of 16 -- corresponding to 16 self-iterations for the implicit and unrolled models and 16 layers for the explicit model.
Our result shows that a constant number of backpropagation steps using the phantom gradient method is enough to learn complex non-solvable transitions and generalize to difficult distributions at test time. 
Since phantom gradients require a constant memory that is independent of the number of self-iteration steps, the training of larger language models appears feasible. 
\paragraph{CatbAbi: A benchmark requiring state tracking.}
\label{sec:catbabi-results}
To evaluate the state-tracking capabilities of SSMs on language tasks, we use the \textsc{CatbAbI} dataset \citep{schlag2020learning}, a modified version of the \textsc{BAbi} dataset \citep{weston2015towards}, consisting of 20 tasks within a 5M token corpus. These tasks, requiring various reasoning abilities like deduction, conference, or counting, involve short stories with embedded questions \citep{schlag2020learning}, and require state tracking in various degrees. We train our implicit SSM model, using Mamba2 as the core architecture, alongside the baseline Mamba2 model, both with up to three layers. Our findings show that the implicit Mamba2 model with a single layer outperforms its single-layer Mamba2 counterpart on most tasks. Additionally, more layers in the implicit model's backbone reduce the number of self-iteration steps needed to solve the tasks (see Appendix~\Cref{fig:catbabi-performance}a, \Cref{fig:catbabi-performance}b).
We furthermore evaluate the performance of the models for tasks sorted by increasing story length. We see how implicit models retain its performance as the lengths increases in~\Cref{fig:catbabi-performance}c at a slight increase in the number of iterations of the implicit models in~\Cref{fig:catbabi-performance}d.

\section{Implicit Large Language Models}
\label{sec:pretraining-deq}
We investigate whether implicit models can be effectively pretrained to function as language models. 
Motivated by the results of \Cref{sec:word-problem}, we implement a pretraining strategy for implicit models with two stages of bounded and free self-iterations.
Transformer (LLama) ~\citet{touvron2023llama} and SSM (Mamba2) ~\citep{dao2024transformers} architectures serve as the core backbones for our implicit models.
In the bounded stage, we train with four self-iterations and a single step of phantom gradient, which we refer to as the ($4+1$)-model.
The ($s+k$)-notation refers to $s$ gradient-tape-free self-iteration steps and $k$ phantom gradient steps. 
$k$ refers to~\Cref{eq:phantom-gradient}, see also~\Cref{fig:deq}.
The free stage starts from a checkpoint of the ($4+1$)-model and increases the number of self-iterations to \num{24}/\num{32} followed by four steps of phantom gradient.
We refer to these models as ($24 + 4$)/($32 + 4$)-models for Mamba2/Llama, respectively.
We employ four model sizes: 125M, 350M, 760M, and 1.3B. 
These models are pretrained in an autoregressive manner for next-token prediction across all sizes on the \textsc{D-Pile}~\citep{gao2020pile} dataset, which consists of 207B tokens.
For baselines, we use both Mamba2~\citep{dao2024transformers} and Llama ~\citep{Beck_Pöppel_Spanring_Auer_Prudnikova_Kopp_Klambauer_Brandstetter_Hochreiter} models previously trained on a corpus of 300B tokens.
Additionally, we reproduce Mamba2$^*$ 
and Llama$\dag$ as baselines trained with the same code and data as our implicit models. 
We evaluate the pretrained models on the test set of the \textsc{D-Pile}, examine their length extrapolation capabilities, and assess their common sense reasoning performance on downstream tasks. 
See \Cref{sup:Pretraining Details} for pretraining details.

\paragraph{Pretraining Results and Downstream Performance.}
We report in \Cref{tab:llm-metrics} the next-token perplexity performance of all models trained on the entire 207B token corpus using a test split of the \textsc{D-Pile}\footnote{The test split represents a random selection of 0.1 percent of the entire dataset. This size is in line with the proportion used for the \textsc{Pile}'s validation set \citep{gao2020pile}.}. We observe our implicit models consistently achieve a lower perplexity compared to their explicit counterparts---see also ~\Cref{fig:opener}\textbf{Bottom}. For details related to the dynamics of the implicit models on \textsc{D-pile}, refer to \Cref{tab:implicit-dynamics stats}. Additionally, we evaluate the models' performance on common sense reasoning tasks using the LM Evaluation Harness~\citep{eval-harness}. 
The results show that implicit Mamba2  outperform the explicit Mamba2$^*$, which are pretrained on the same number of tokens, on most tasks. This difference becomes more pronounced as the size of the models increases, specifically with the 760M and 1.3B variants.
Compared to the original Mamba2 baseline, trained on 1.5 times more data, the implicit models do better on \textsc{HellaSwag}, \textsc{PIQA}, \textsc{Arc-E}, and are competitive in \textsc{Lambada} and \textsc{Arc-C}. Across all scales, the implicit Mamba2 models significantly outperform Mamba2 in the \textsc{HellaSwag} task, yet they underperform in \textsc{Winogrande} and \textsc{OpenbookQA}. 

It is also noteworthy that our implicit Llama models substantially outperform the baseline Llamas, including both the results reported in~\citep{Beck_Pöppel_Spanring_Auer_Prudnikova_Kopp_Klambauer_Brandstetter_Hochreiter} and the Llama$\dag$. This improvement is consistent across all tasks and model sizes. Strikingly, we note that our implicit Llama (32+4) 760M is competitive to the explicit Llama$\dag$ 1.3B.

\paragraph{Implicit-SSMs Demonstrating Length Extrapolation Capabilities}

\begin{figure}[h]  
  \centering  
    \includegraphics[width=\linewidth]{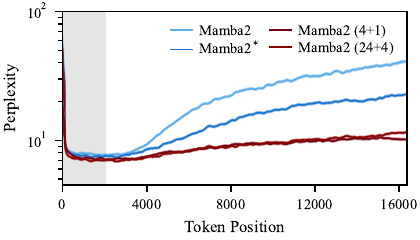} 
 
  \caption{
    Length extrapolation performance of the per token perplexity on the the test split of the \textsc{D-Pile} of the original 1.3B Mamba2, our Mamba2$^*$, and our implicit Mamba2 with (4+1) and (24+4) self-iterations.
    Shaded gray area shows the in-distribution length. 
    }
  \label{fig:per-token-perplexity}  
\end{figure}   
All implicit models in our study were trained on sequences of 2048 tokens. 
To assess their capability for length extrapolation, we evaluated the implicit models on the test split of the \textsc{D-Pile}, which was packed with longer sequences consisting of 4096, 8192, and 16384 tokens. 
We compared these results with the baseline Mamba2 and Mamba2$^*$ in \Cref{fig:per-token-perplexity}, where the per-token perplexities are reported. Numerical values for all models at a range of token positions are reported in \Cref{tab:all-lengths-ppls} in the Appendix. 
In comparison with the baseline Mamba2 models, our implicit Mamba2 models demonstrate stronger robustness on longer sequences.
\paragraph{Effective Duality between Simultaneous Mode and Sequential Mode}
Autoregressive generation, a core functionality of contemporary language models, for implicit models requires that the sequential mode introduced in \Cref{sec:implicit-models} and \Cref{fig:duality} is functionally equivalent to the simultaneous mode used for pretraining.
Effectively, the loops over $s$ and $t$ in \Cref{eq:implicit-ssm-recurrence} have to be interchangeable (also see \Cref{fig:exchange-for-loops}), which we empirically demonstrate  with our pretrained language models. Specifically, we utilize our 1.3B implicit Mamba2 (24+4) and Llama (32+4) models to compute next-token predictions on the \textsc{D-Pile} test split. The models are fed identical input tokens of length 2048 in batches of size 16 and predict outputs greedily in both simultaneous and sequential modes. We observe token match rates of \SI{97.6}{\percent} (on 3M tokens) between the outputs of the two modes for the implicit Mamba2, and \SI{97.7}{\percent} (on 330K tokens) for the implicit Llama. Examples of these model predictions are provided in Appendix \Cref{tab:simul_seq_examples}. The per-token perplexity differences in the predictions of the models are depicted in~\Cref{fig:duality}. 
To our knowledge, this is the first demonstration of sequential evaluation with self-iterated models at constant memory in the number of self-iterations, enabling auto-regressive generation for this class of models.

\begin{table*}[t!]      
    \centering  
        
    \caption{
        Comparison of test set perplexity and downstream performance. 
        We compare our implicit models, which have 4 self-iteration steps and 1 phantom gradient step (denoted as 4+1), and those with 24/32 self-iteration steps and 4 phantom gradient steps (denoted as 24+4/32+4), with our baseline models Mamba2$^*$ and Llama$^\dag$. 
        These baseline models as well as the implicit models are trained on 207B tokens from the \textsc{D-Pile} dataset and range in size from 130M to 1.3B parameters. 
        For further comparison, we include the original Mamba2 ~\citep{dao2024transformers} (trained on 300B tokens of the \textsc{Pile}) and the Llama (trained on 300B tokens of the \textsc{SlimPajama}) from \citep{Beck_Pöppel_Spanring_Auer_Prudnikova_Kopp_Klambauer_Brandstetter_Hochreiter}. 
        The best performing model for each type is highlighted in bold, and the second-best is underlined. 
        The training wall-clock time (WCT) is provided for implicit models relative to their explicit counterpart. For $(24+4)$- and $(32+4)$-iterated models, we provided the weighted average measured over the entire curriculum. We omit WCTs for the 370M-model series since the runs were carried out on various node sizes, making direct comparison difficult.
    } \label{tab:llm-metrics} 
    \resizebox{\textwidth}{!}{%
        \begin{tabular}{lccccccccccccc} 
            \toprule      
             & \multirow{2}{*}{\textbf{Model}} & \textbf{Dataset/} & \textbf{Rel. WCT} & \textbf{D-Pile} & \textbf{LAMBADA} & \textbf{LAMBADA} & \textbf{HellaSwag} & \textbf{PIQA} & \textbf{Arc-E} & \textbf{Arc-C} & \textbf{WinoGrande} & \textbf{OpenbookQA} & \textbf{Average} \\    
            & & \textbf{Tokens (B)} &  \textbf{$\sfrac{\textrm{Impl}}{\textrm{Expl}}$} & \textbf{ppl$\downarrow$} & \textbf{ppl$\downarrow$} & \textbf{acc$\uparrow$} & \textbf{acc$\uparrow$} & \textbf{acc$\uparrow$} & \textbf{acc$\uparrow$} & \textbf{acc$\uparrow$} & \textbf{acc$\uparrow$} & \textbf{acc$\uparrow$} &\textbf{acc$\uparrow$}\\    
            \midrule

  \multirow{8}{*}{\rotatebox{90}{130M}} &
            Mamba2& Pile/300 & - &13.72 & \textbf{16.83} & \textbf{0.4388} & 0.3525 & \underline{0.6496} &  \underline{0.4739} & \textbf{0.2423} & \textbf{0.5233} & \textbf{0.306} & \textbf{0.4266} \\  
            
            & Mamba2$^*$ & D-Pile/207 & 1  &\underline{13.05} & 18.51 & 0.4116 & 0.3527 & \textbf{0.6572} & \textbf{0.4815} & \underline{0.2372} & 0.5130 & \underline{0.300} & \underline{0.4219} \\  
            
            & Mamba2(4+1)-ours & D-Pile/207 & 1.97 & 13.76 & 18.58 & 0.4118 & \underline{0.3628} & 0.6485 & 0.4537 & 0.2287 & 0.5107 & 0.288 & 0.4149 \\  
            
            &Mamba2(24+4)-ours& D-Pile/207 & 3.27 &\textbf{12.86} & \underline{18.03} & \underline{0.4174} & \textbf{0.3673} & \underline{0.6496} & 0.4604 & \underline{0.2372} & \underline{0.5178} & 0.290 & 0.4200 \\ 
            
            \cmidrule(l){2-13}
            & Llama & SlimPajama/300 & - &- & 39.21 & 0.3154 & 0.3409 & \underline{0.6545} & 0.4533 & 0.2363 & 0.5067 &- & 0.4178 \\
            
            & Llama$^\dag$&D-Pile/207 & 1 & 12.77& 17.08 & 0.4297 & 0.3513 & 0.6540 & 0.4794 & \textbf{0.2440} & 0.5122 & 0.280 & 0.4215 \\   
            
            &Llama (4+1)-ours& D-Pile/207 & 0.90 & \underline{12.73} & \underline{15.54} & \underline{0.4518} & \underline{0.3706} & 0.6447 & \underline{0.4823} & \underline{0.2372} & \textbf{0.5391} & \underline{0.290} & \underline{0.4308} \\  

            &Llama (32+4)-ours& D-Pile/207 & 2.08 & \textbf{11.73} & \textbf{13.39} & \textbf{0.4801} & \textbf{0.3958} & \textbf{0.6676} & \textbf{0.4886} & 0.2355 & \underline{0.5304} & \textbf{0.298} & \textbf{0.4423} \\

            \midrule
            \multirow{8}{*}{\rotatebox{90}{370M}} & Mamba2 & Pile/300 & - &10.55 & \textbf{8.00} & \textbf{0.5593} & \underline{0.4692} & \textbf{0.7046} & 0.5476& 0.2671 & \textbf{0.5564} & \textbf{0.324} & \textbf{0.4897} \\  
            
            & Mamba2$^*$ & D-Pile/207 & - & 10.18 & 8.96 & 0.5333 & 0.4653 & 0.6942 & \textbf{0.5526} & \underline{0.2696} & 0.5320 & 0.306 & 0.4790 \\  
            
            &Mamba2(4+1)-ours &D-Pile/207 & - & \underline{10.02}& 8.79 & 0.5457 & 0.4684 & 0.6899 & 0.5358 & \textbf{0.2696} & 0.5162 & 0.308 & 0.4762 \\ 
            
           &Mamba2(24+4)-ours &D-Pile/207 & - & \textbf{9.70} & \underline{8.26} & \underline{0.5575} & \textbf{0.4792} & \underline{0.7040} & \underline{0.5484} & 0.2688 & \underline{0.5351} & \underline{0.316} & \underline{0.487} \\ 
           
           \cmidrule(l){2-13} 
           & Llama & SlimPajama/300 & - &- & 15.73 & 0.4419 & 0.4445 & 0.6915 & 0.5223 & \underline{0.2628} & 0.5359 & - & 0.4832 \\
           
           &Llama$^\dag$ & D-Pile/207 & - & 10.30 & 8.37 & 0.5624 & 0.4537 & 0.6844 & \underline{0.5476} & 0.2577 & 0.5541 & \underline{0.318} & 0.4826 \\  
           
            &Llama (4+1)-ours & D-Pile/207&  - & \underline{9.66}& \textbf{7.03} & \underline{0.5898} & \underline{0.5030} & \underline{0.7024} & \textbf{0.5539} & 0.2611 & \underline{0.5572} & 0.314 & \underline{0.4973} \\  
            
            &Llama (32+4)-ours & D-Pile/207& - &\textbf{9.43}& \underline{7.04} & \textbf{0.5956} & \textbf{0.5114} & \textbf{0.7078} & 0.5244 & \textbf{0.2705} & \textbf{0.5722} & \textbf{0.320} & \textbf{0.5003} \\  
            
             \midrule
           \multirow{8}{*}{\rotatebox{90}{760M}} & Mamba2 & Pile/300 & - & 9.23& \textbf{5.86} & \underline{0.6167} & 0.5492 & 0.7198 & \textbf{0.6103} & 0.2850 & \textbf{0.6030} & \textbf{0.362} & \underline{0.5351} \\
           
           & Mamba2$^*$ & D-Pile/207 & 1 & 8.98 & 6.24 & 0.6125 & 0.5418 & 0.7231 & 0.6044 & 0.2858 & \underline{0.5777} & \underline{0.338} & 0.5262 \\
           
            &Mamba2(4+1)-ours &D-Pile/207 & 2.05 & \underline{8.60} & 6.15 & 0.6117 & \underline{0.5569} & \underline{0.7296} & 0.6077 & \textbf{0.3140} & 0.5509 & 0.336 & 0.5295 \\
            
            &Mamba2(24+4)-ours &D-Pile/207 & 3.43 &\textbf{8.35} & \underline{5.90} & \textbf{0.6191} & \textbf{0.5698} & \textbf{0.7334} & \underline{0.6090} & \underline{0.3131} & 0.5730 & \underline{0.338} & \textbf{0.5365} \\  
            
            \cmidrule(l){2-13} 
          & Llama & SlimPajama/300 & - & - & 9.90 & 0.5141 & 0.5216 & 0.7095 & 0.5648 & 0.2875 & 0.5667 & - & 0.5274 \\ 
          
        &Llama$^\dag$ & D-Pile/207&  1 & 8.88& 5.77 & 0.6375 & 0.5448 & 0.7171& 0.5905 & 0.2816& \textbf{0.6054} & 0.338& 0.5307 \\  
            
        &Llama (4+1)-ours & D-Pile/207& 1.64 &\underline{8.27}& \underline{5.15} & \underline{0.6524} & \underline{0.5853} & \underline{0.7312} & \underline{0.6052} & \textbf{0.3097} & 0.5967 & \textbf{0.356} & \underline{0.5481} \\  

        &Llama (32+4)-ours & D-Pile/207 &  2.97 & \textbf{7.90} & \textbf{4.82} & \textbf{0.6703} & \textbf{0.5995} & \textbf{0.7416} & \textbf{0.6187} & \underline{0.3012} & \underline{0.5991} & \underline{0.344} & \textbf{0.5535} \\  
            \midrule  

            \multirow{8}{*}{\rotatebox{90}{1.3B}} &
            
            Mamba2 & Pile/300 & - & 8.40  & \underline{5.02} & \textbf{0.6559} & 0.5995 & 0.7378 & 0.6418 & \underline{0.3319} & \textbf{0.6117} & \textbf{0.378} & \textbf{0.5652} \\
            
            & Mamba2$^*$ & D-Pile/207 & 1 & 8.28 & 5.12 & 0.6456 & 0.5939 & \underline{0.741}6 & 0.6145 & 0.3123 & \underline{0.6117} & 0.352 & 0.5531 \\ 
            
            &Mamba2(4+1)-ours & D-Pile/207& 1.91 &\underline{7.97} & 5.21 & 0.6383 & \underline{0.6136} & \textbf{0.7437} & \textbf{0.6343} & 0.3302 & 0.5746 & \underline{0.354} & 0.5555 \\  
            
            &Mamba2(24+4)-ours & D-Pile/207& 3.19 & \textbf{7.70 }& \textbf{4.99} & \underline{0.6489} & \textbf{0.6267} & \underline{0.7416} & \underline{0.6423 }& \textbf{0.3336} & 0.5888 & 0.352 & \underline{0.5620} \\ 

            \cmidrule(l){2-13} 
            &Llama & SlimPajama/300 & - & - & 7.23 & 0.5744 & 0.5781 & 0.7312 & 0.6279 & 0.3174 & 0.5904 & - & 0.5699 \\  
            
            &Llama$^\dag$  & D-Pile/207 & 1 & 7.99  & 4.95 & 0.6569 & 0.5936 & 0.7432 & \underline{0.6385} & 0.3217 & 0.6062 & 0.352 & 0.5589 \\  
            
            &Llama (4+1)-ours &D-Pile/207 & 1.96 &\underline{7.66} & \underline{4.40} & \underline{0.6852} & \underline{0.6397} & \underline{0.7448} & 0.6338 & \underline{0.3396} & \textbf{0.6575} & \underline{0.360} & \underline{0.5801} \\ 
            
            & Llama (32+4)-ours & D-Pile/207 & 3.91 & \textbf{7.24} & \textbf{4.24} & \textbf{0.6901} & \textbf{0.6583} & \textbf{0.7465} & \textbf{0.6654} & \textbf{0.3601} & \underline{0.6401} & \textbf{0.364} & \textbf{0.5892} \\  
            \bottomrule      
        \end{tabular}
        }   
\end{table*}
\section{Related Work}
\paragraph{Adaptive-Compute Time} The idea of an adaptive compute budget goes back to \cite{Schmidhuber_2012} who employ a halting neuron to delimit the computation on a particular input. \citet{Graves_2017} generalized the idea and regularised the halting condition to encourage the network to stop early.
They implemented an adaptive-depth RNN and demonstrated the network adjusting the compute budget based on the difficulty of instances in a parity-check task.
This idea was later applied to Transformers, resulting in ''Universal Transformers'' (UT)~\cite{dehghani2018universal}. UTs can either be unrolled to a fixed depth or augmented with a dynamic halting condition (DHC) per token.
UTs were later shown to exhibit improved scaling laws compared to standard transformers~\cite{Kaplan_McCandlish_Henighan_Brown_Chess_Child_Gray_Radford_Wu_Amodei_2020}.
PonderNet~\cite{Banino_Balaguer_Blundell_2021} introduced a principled probabilistic model for determining the halting condition. This approach improved on the UT on the \textsc{bAbi} benchmark.
Recently, a mixture-of-experts (MoE) variant of the UT (MoEUT) was presented~\cite{csordas2024moeut} with 1B parameters, seeking to improve the parameter-to-compute ratio of UTs.
The MoEUT is an unrolled model with fixed iterations and does not employ a DHC. While our models presented here are dense, they could, in principle, be turned into MoE. 
\citet{gatmiry2024can} show that looped linear transformers implement gradient-descent until convergence on the prediction loss defined by previous input-output examples in the context window.
\citet{lim2024parallelizing} take the opposite approach to our work:
Instead of augmenting SSMs or transformers, they propose an approach based on fixed-point iterations to enable parallel training of RNNs.
However, their method incurs cubic cost in terms of state size, limiting the method to smaller models. 

\paragraph{Reasoning and out-of-distribution generalization.}

The ability of looped models to generalize better to input lengths not seen during training is empirically well established: For example~\citet{Yang_Lee_Nowak_Papailiopoulos_2024} show this for looped transformers, while \citet{anil2022path} demonstrate length generalization for DEQs, particularly when they are path independent.
\citet{Du_Li_Tenenbaum_Mordatch} show that energy-based models trained to map energy-gradient-descent steps to algorithmic steps, can length generalize in summation, and complex algorithms such as shortest-path.
On the theoretical side,
The pioneering work of \citet{siegelmann1992} shows that iterated RNNs are Turing complete at infinite numerical precision. 
More recently, \citet{deletang2023neural} studied a number of sequence models and report that grouping tasks by their rung in the Chomsky hierarchy is predictive of models ability to length-generalize. 
While the works of Merrill \emph{et al}~ \cite{merrill2019sequential, merrill2020formal, Merrill_Sabharwal_2023, merrill2024illusion}, which we discuss in\Cref{sec:limitations-transformer-ssm}, showed that both transformers and SSMs are restricted to \tc; several studies sought to find more precise constraints.
\citet{Weiss_Goldberg_Yahav_2021} observe that programs written in a specific language (RASP) can be mapped to transformer models of sufficient capacity.
\citet{Zhou_Bradley_Littwin_Razin_Saremi_Susskind_Bengio_Nakkiran_2024} then showed that transformers tend to length-generalise if the underlying data-generating process can be expressed in RASP.
\citet{sarrof2024expressive} derived a similar refined constraint for SSMs and showed that they can precisely express star-free regular languages.
\citet{grazzi2025unlocking} demonstrate that SSMs can track state in simple problems, such as parity, when their (diagonal) recurrence matrix \(\Lambda\) in \Cref{eq:linear-ssm} permits negative eigenvalues.
Moreover, they illustrate that a variant of DeltaNet~\cite{Yang_Wang_Zhang_Shen_Kim} with (possibly) negative eigenvalues can solve the S5 problem when only swaps of two values are considered in the transition.
However, no variant of Mamba or DeltaNet was capable of learning S5 and achieving length generalization.
To tackle the parallelization-expressiveness trade-off, \citet{Beck_Pöppel_Spanring_Auer_Prudnikova_Kopp_Klambauer_Brandstetter_Hochreiter} propose two new LSTM-inspired layer architectures: the sLSTM and mLSTM layers.
While the latter is parallelizable, the former is not and intended to enable the whole model to recognize regular languages.
Finally, \citet{soulos2024recurrent} survey strategies for chunking input sequences with transformers, maintaining parallelizability within each chunk and using RNN-like transitions between chunks.
They find these architectures recognize regular languages for small chunk sizes with scaling remaining a challenge.

\section{Discussion and Conclusion}

This work demonstrates that models implicitly defined by a fixed point iteration can solve hard state tracking problems that resist the capabilities of transformers and SSMs.
We provide theoretical insight how implicit SSMs can deviate from pure diagonal and linear token-to-token transitions and effectively become an RNN in the limit.
When trained with a relatively small number of self-iterations,
our models seamlessly generalize from simpler to harder word problems (see \Cref{fig:word-problem}).
This property is of special interest in language modeling where 'hard' sequences are rare but might occur clustered in applications requiring state tracking.

Our extensive study of synthetic state tracking problems informs a pretraining schedule for large language models.
The implicit Llama and Mamba2 models improve over the baselines in many cases, and prove particularly beneficial on downstream tasks such as \textsc{HellaSwag} (see~\Cref{tab:llm-metrics}).
Performance on language modeling is typically primarily determined by parameter count which traditionally caused weight-shared models to underperform~\cite{tay2022scalinglawsvsmodel}. 
While implicit models lift the state-tracking limitations of explicit language models,
state-of-the-art explicit models perform extraordinarily well on natural language.
Self-iteration introduces additional cost that might only amortize on specific problems requiring state-tracking over long sequences.
However, emerging hardware that accelerates such self-iteration would alleviate this overhead~\citep{photonics_roadmap}.
Furthermore, as LLMs make more progress on reducing perplexity, they may eventually face tokens requiring RNN-like transitions.

Finally, given the recent rise of test-time compute~\citep{Snell_Lee_Xu_Kumar_2024} and latent-space reasoning~\citep{hao2024traininglargelanguagemodels}, models with adaptive depth per token deserve careful consideration as potential bridgeheads for such techniques as they natively offer adaptive depth and latent-space iteration.

\section*{Impact Statement}
This paper presents work aimed at advancing the field of machine learning by developing sequential and language models that can track state and hence prove to be more powerful in reasoning. Regarding language modeling, there are numerous potential societal consequences to consider, such as the energy consumption required for the training of large models and the additional iterations that our implicit models impose (see~\Cref{tab:gpu-hours}) to enhance computational expressivity of language models. When developing training algorithms for our language modeling, computational efficiency was taken into account--- see~\Cref{sup:Pretraining Details}. Nevertheless, further research is essential to maintain the power of the proposed models while continuing to reduce computational costs.

\section*{Acknowledgment} 
The authors of the paper would like to thank colleagues from the Analog Optical Computer (AOC) team at Microsoft Research Cambridge for their discussions and feedback during the project. Additionally, we acknowledge support from the Microsoft GCR team for providing the GPUs and prompt assistance in resolving issues faced during the training of large language models.
MS was partially supported with funds from Bosch-Forschungsstiftung im Stifterverband.

\bibliography{9_references}
\bibliographystyle{icml2025}

\newpage
\appendix
\onecolumn
\section{Implicit State-Space Models}
\subsection{Fixed Point Iteration}
\Cref{fig:deq} visualizes the forward and backward pass of implicit models.
A fixed point search is conducted by unrolling the model until the relative difference $\frac{\lvert z_t\iter{s} - z_t\iter{s-1}\rvert}{\lvert z_t\iter{s-1}\rvert}$ falls below a tolerance $\epsilon$, where we choose $\epsilon = 0.05$ for the language models and $\epsilon = 0.01$ for the state-tracking models. 
\begin{figure}
    \centering
    \includegraphics[width=0.5\linewidth]{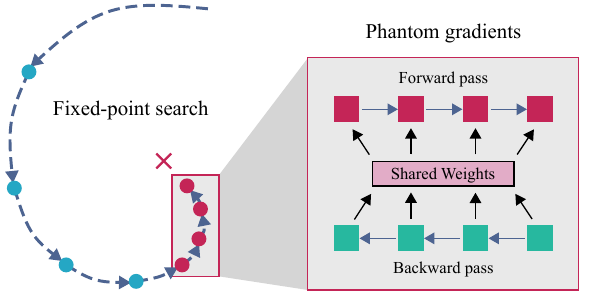}
    \caption{
        Fixed-point iteration and phantom gradients: 
        A neural network is iterated until convergence in the forward pass. When employing the phantom gradient principle, only the final $k$ steps of the forward pass are considered for the backward pass as expressed in \Cref{eq:phantom-gradient}.
        Therefore, the memory required for the backward pass is proportional to $k$ and in particular not proportional to the number of steps in the forward pass.
    }
    \label{fig:deq}
\end{figure}

The phantom gradient approach backpropagates only through a fixed number of steps, effectively decoupling the number of iterations in the forward pass from the number of iterations in the backward pass.

\Cref{fig:exchange-for-loops} shows the two loops over the sequence $t$ and over depth $s$ as presented in \Cref{sec:Implicit State-space Models}.
The equations involving the iteration variables are printed again below.
\begin{align}
    h_t\iter{s} &= \Lambda\func{z_t\iter{s-1}, x_t} h_{t-1}\iter{s} + u\func{z_t\iter{s-1}, x_t}\nonumber\\
    z_t\iter{s} &= f_\theta\func{z_t\iter{s - 1}, h_{t-1}\iter{s}, x_t} \,. \nonumber
\end{align}
\begin{figure}
    \centering
    \includegraphics[width=0.8\linewidth]{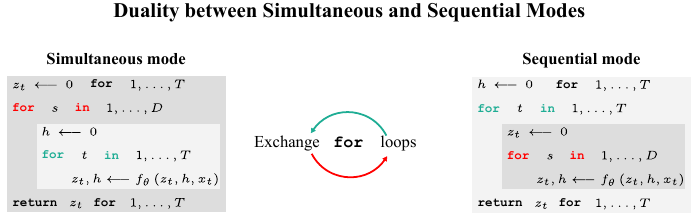}
    \caption{
        The simultaneous mode exploits the parallelism of state-space models or transformers,
        while the sequential mode is well suited for language generation.
        State-space models can further utilize the sequential mode for processing with constant memory over any sequence length.
        The two modes emerge from exchanging the for loops over the two variables $t$ and $s$ in the DEQ iteration~\eqref{eq:implicit-ssm-recurrence}.
        We demonstrate in \Cref{sec:pretraining-deq}, and \Cref{fig:duality}, that 1.3B parameter language models trained with the simultaneous mode show negligible difference in perplexity when evaluated with the sequential mode.
        }
    \label{fig:exchange-for-loops}
\end{figure}
\subsection{Proof of Theorem 1}
    \label{sec:proof-theorem}
    \setcounter{theorem}{0}
    \begin{theorem}[Extended]
        An implicit SSM defined by \Cref{eq:implicit-ssm-state-fixed-points,eq:implicit-ssm-block-fixed-points} with generic weights yields a non-linear and non-diagonal state-to-state transition function
        $\xi:h_{t-1}\fp \mapsto h_t\fp$.    
        Let ${
            g\func{z, h, x, \theta} = z - f_\theta\func{z, h, x}
        }$.
        If $g\func{z_t\fp, h_{t-1}\fp, x_t, \theta} = 0$ and ${J_{g,z}\vert_{h_{t-1}\fp,x_t,\theta} = \partialdiff{g}{z}{h_{t-1}\fp,x_t,\theta}}$ is non-singular,
        there exists an open set $U$ with $\func{h_{t-1}\fp, x_t, \theta}\in U$ and a continuously differentiable function $\varphi$ on $U$ s.t. 
        \begin{align}
            g\func{\varphi\func{h, x, \theta}, h, x, \theta} = 0 \quad \forall \func{h, x, \theta}\in U \,. \nonumber
        \end{align}        
        In this case, the state-to-state Jacobian $J_\xi$ is given by
                \begin{align}
                 \left. \frac{\mathrm{d}h_t\fp}{\mathrm{d}h_{t-1}\fp}\right|_{z_t\fp, x_t, \theta}
                  = \Lambda\func{z_t\fp, x_t}
                  + \frac{\partial \Lambda}{\partial z_t\fp} \frac{\partial \varphi}{\partial h_{t-1}\fp}\mathrm{diag}\func{h_{t-1}\fp}
                  + \frac{\partial u}{\partial z_t\fp} \frac{\partial \varphi}{\partial h_{t-1}\fp} \,. \nonumber
            \end{align}
    \end{theorem}
    \begin{proof}
        Suppose that fixed points $h_{t-1}\fp, z_{t-1}\fp$ have been found for $t-1$.
        Our goal is to back the above statements for $t$.
        Let $T\func{z} = f_\theta\func{z, h_{t-1}\fp, x_t}$.
        With generic weights $\theta$, e.g. $\theta$ in general linear position, a multi-layer neural network $f_\theta$ is non-linear and its input-output function $T\func{z}$ has a non-diagonal Jacobian.
        Finding the fixed point $z_t\fp$ can be expressed in terms of recursive application of the non-linear operator $T$ as
        \begin{align}
            z_t\fp = \lim_{s\rightarrow\infty}z_t\iter{s} = \lim_{s\rightarrow\infty} T^{s}\func{z_t\iter{0}}\,,
        \end{align}
        where we set $z_t\iter{0} = 0$.
        By extension from multi-layer networks to self-iterated multi-layer networks, generic weights yield a non-linear function ${\func{h_{t-1}\fp, x_t, \theta}\longrightarrow z_t\fp}$.
        Once the fixed point $z_t\fp$ is found, we can compute $h_{t-1}\fp$ by a single pass through \Cref{eq:implicit-ssm-state-fixed-points}.

        To proof \Cref{eq:implicit-ssm-jacobian}, we apply the implicit function theorem to the function
        \begin{align}
            g\func{z, h, x, \theta} = z - f_\theta\func{z, h, x} \,.
        \end{align}
        If $g\func{z_t\fp, h_{t-1}\fp, x_t, \theta} = 0$ and ${J_{g,z}\vert_{h_{t-1}\fp,x_t,\theta} = \partialdiff{g}{z}{h_{t-1}\fp,x_t,\theta}}$ is non-singular,
        the implicit function theorem guarantees the existence of an open set $U$ with $\func{h_{t-1}\fp, x, \theta}\in U$ and a continuously differentiable function $\varphi$ on $U$ s.t. 
        \begin{align}
            g\func{\varphi\func{h, x, \theta}, h, x, \theta} = 0 \quad \forall \func{h, x, \theta}\in U \,.
        \end{align}
        The derivative of $\varphi$ at the fixed point is given by
        \begin{align}
            \partialdiff{\varphi}{h}{h_{t-1}\fp, x_t, \theta} = 
            \func{I - \partialdiff{f_\theta}{z}{z_t\fp, h_{t-1}\fp, x_t, \theta}}^{-1}
            \partialdiff{f_\theta}{h}{z_t\fp, h_{t-1}\fp, x_t, \theta}
        \end{align}
        By the above argument, $\varphi$ is a non-linear function if $f_\theta$ is a non-linear function, 
        and ${\varphi\func{h_{t-1}\fp, x_t, \theta} = z_t\fp = \lim_{s\rightarrow\infty} T^{s}\func{0}}$.
    
        Now, consider \Cref{eq:implicit-ssm-state-fixed-points} at the fixed point
        \begin{align}
                h_t\fp = \Lambda\func{z_t\fp, x_t} h_{t-1}\fp + u\func{z_t\fp, x_t} \,,
        \end{align}
        where $z_t\fp = \varphi\func{h_{t-1}\fp, x_t, \theta}$.
        If $\varphi$ is a non-linear function of $h_{t-1}\fp$, then $\ h_{t-1}\fp \mapsto h_t\fp$ is a non-linear function as well.
        Equipped with the derivative of $\varphi$, we can derive the state-to-state Jacobian of the implicit SSM as
        \begin{align}
            \left. \frac{\mathrm{d}h_t\fp}{\mathrm{d}h_{t-1}\fp}\right|_{z_t\fp, x_t, \theta}
            &= \Lambda\func{z_t\fp, x_t} \frac{\partial h_{t-1}\fp}{\partial h_{t-1}\fp}+ \frac{\partial\Lambda}{\partial h_{t-1}\fp} h_{t-1}\fp + \frac{\partial u}{\partial h_{t-1}\fp} \nonumber\\
            &=\Lambda\func{z_t\fp, x_t} 
            + \frac{\partial \Lambda}{\partial z_t\fp} \frac{\partial \varphi}{\partial h_{t-1}\fp}\mathrm{diag}\func{h_{t-1}\fp} 
            + \frac{\partial u}{\partial z_t\fp} \frac{\partial \varphi}{\partial h_{t-1}\fp} \,, \nonumber
        \end{align}
        where we used the implicit dependency of $z_t\fp$ on $h_{t-1}\fp$ via $\varphi$ to differentiate $\Lambda$ and $u$.
    \end{proof}
    This equation (\Cref{eq:implicit-ssm-jacobian} in the main text) highlights the non-diagonal corrections that the self-iteration of $z_t\iter{s}$ introduces to the diagonal Jacobian $\Lambda$ of the explicit state-space \Cref{eq:linear-ssm}.
    We conduct a numerical check of \Cref{eq:implicit-ssm-jacobian} in \Cref{fig:jacobian-check} on data from the synthetic task devised in \Cref{sec:word-problem}.
    Therefore, the state $h_{7}\fp$ is created by processing a sequence of \num{8} tokens ${x_0,\dots,x_7}$.
    Next, we loop over $s$ until $z_8\iter{s}, h_8\iter{s}$ converge to a fixed point, where the stopping criterium is a relative difference 
    $\frac{\lvert z_8\iter{s} - z_8\iter{s-1}\rvert}{\lvert z_8\iter{s-1}\rvert}$ of \num{1e-4}.
    \Cref{fig:jacobian-check} compares the Jacobian of $h_7\fp\longrightarrow h_8\fp$ when backpropagating through the iteration over $s$ with PyTorch's automatic differentiation (left column) versus \Cref{eq:implicit-ssm-jacobian} (central column).
    The right column shows the relative difference between the two methods as 
    \(\frac{\lvert J_\text{autograd} - J_\text{theorem}\rvert}{\lvert J_\text{autograd} \rvert + \lvert J_\text{theorem} \rvert + \epsilon}\)
    with $\epsilon = 10^{-16}$.
    The top row shows a randomly initialized model, and the bottom row shows a trained model.
    Both models are based on the Mamba2 architecture with a single layer and $d_\text{model}=64, d_\text{head}=8, d_\text{state}=4$ such that the model has $16$ heads.
    A single state variable is selected per head, resulting in a $16\times 16$ grid per model and evaluation method.
    \begin{figure}
        \centering
        \includegraphics[width=0.6\linewidth]{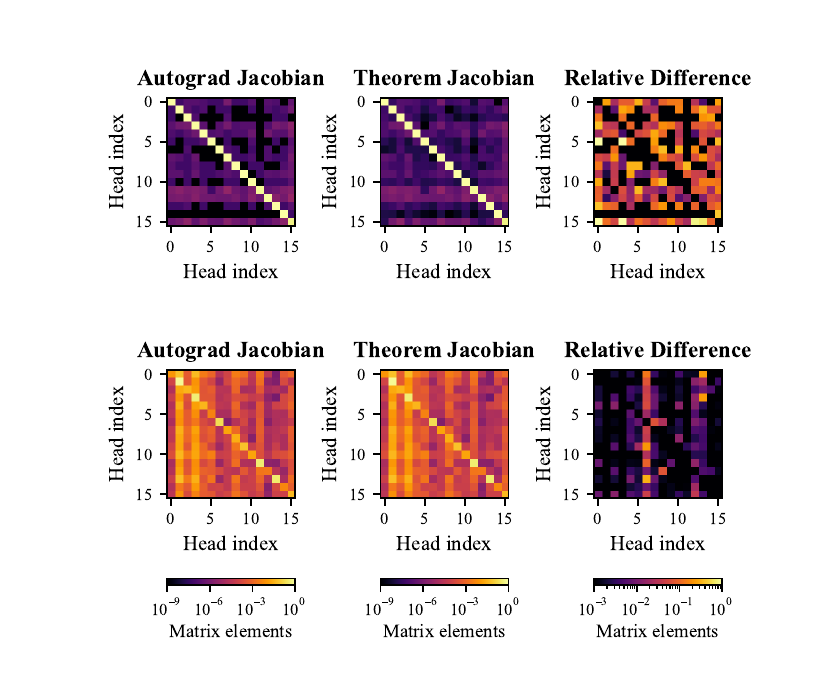}
        \caption{
            State-to-state Jacobian according to \Cref{eq:implicit-ssm-jacobian} of a single layer implicit Mamba2.
            To simplify the visualization, only one state variable of each head is plotted.
            \textbf{Left:} Jacobian computed with automatic differentiation through the fixed point iteration.
            \textbf{Center:} Jacobian computed with \Cref{eq:implicit-ssm-jacobian}.
            \textbf{Right:} Relative difference \(\frac{\lvert J_\text{autograd} - J_\text{theorem}\rvert}{\lvert J_\text{autograd} \rvert + \lvert J_\text{theorem} \rvert + \epsilon}\).
            \textbf{Top:} Randomly initialized model.
            \textbf{Bottom:} Trained model.
        }
        \label{fig:jacobian-check}
    \end{figure}
\section{Algebraic Structure of Finite State Machines}
\label{sec:algebra}
This section provides a basic introduction to the word problem and its relation to simulating finite state machines (FSMs).
We start with some results in the circuit complexity and then relate them to the properties of FSMs.
\subsection{Circuit Complexity}
\label{sec:circuit-complexity}
Efficient parallelization is one of the central features enabling transformers and SSMs to scale to large machine learning problems such as language modeling.
Parallel circuits, however, face fundamental trade-offs regarding the class of problems that they can address. 
Circuit complexity theory provides a framework to characterize the types of problems that parallel circuits can solve.
\tc is the class of circuits with constant depth and polynomial width composed of unbounded fan-in \AND-gates, \OR-gates, \NOT-gates and \MAJ-gates.
The second class of interest, \nc, is represented by logarithmic depth circuits with a polynomial number of bounded fan-in gates. 
From the perspective of formal languages, \nc is equivalent to the class of circuits recognizing the regular languages.
Since the unbounded fan-in gates allowed in \tc circuits can be constructed from log-depth circuits with bounded fan-in, it follows that $\text{TC}^0\subset\text{NC}^1$.
It is open if \tc is a proper subset of \nc, and we will discuss a regular language for which no \tc circuit construction is known.

Both transformers and SSMs can be simulated by \tc circuit families under mild assumptions~\cite{merrill2022saturated, merrill2024illusion}.
If \tc is a proper subset of \nc, the leading sequence models today cannot even recognize all regular languages.
Consequentially, they cannot execute arbitrary finite state machines (FSMs), a fundamental skill to execute tasks, or to form world models~\cite{vafa2024evaluating}.
Many empirical studies confirm these theoretical limitations of transformers and SSMs to learn regular languages~\cite{deletang2023neural, sarrof2024expressive, strobl2024formal}.
At the same time, recurrent neural networks are known to recognize regular languages~\cite{kleene1951representationof, elman1991distributed, merrill2020formal}, and to effectively implement internal FSMs to solve language problems~\cite{Omlin1996extraction}.
\subsection{Algebraic Concepts of Finite State Machines}
\paragraph{Monoids and Groups}
There is a tight relationship between finite state machines and algebraic concepts such as monoid and groups. 
We define the relevant concepts for our state tracking problem described in \Cref{sec:word-problem}
\begin{definition}[Monoid]
    A set $M$ and a binary operation $\circ: M\times M\longrightarrow M$ are called a \textit{monoid} $\func{M, \circ}$ if 
    \begin{enumerate}
        \item there exists an identity element $e\in M$ with $e\circ m = m\circ e = m$ for all $m\in M$
        \item the operation $\circ$ is associative, i.e. ${(m_1\circ m_2)\circ m_3 = m_1\circ (m_2\circ m_3)}$ for all $m_1, m_2, m_3 \in M$.
    \end{enumerate}
\end{definition}
Straight forward examples for monoids are natural, rational or real numbers with multiplication, or strings with string concatenation.
Since monoid are associative, we can simplify notation and write $m\circ m = m^2$, and so on for all powers $k\in\mathbb{N}$.
\begin{definition}[Aperiodic Monoid]
    A monoid $\func{M, \circ}$ is called \textit{aperiodic} if for all $m\in M$ there is a $k\in\mathbb{N}$ s.t. $m^k = m^{k+1}$.
\end{definition}
Monoid where each element has a a corresponding inverse element are called \emph{groups}.
\begin{definition}[Group]
    A \textit{group} $\func{G, \circ}$ is a monoid with the additional property that for every $g\in G$ there is $g^{-1}\in G$ s.t. ${g\circ g^{-1} = g^{-1}\circ g = e}$.
\end{definition}
Examples for groups are rational numbers with multiplication, or the orthogonal matrices with matrix multiplication.
Notably, permutations on a set of $k$ elements for $k\in\mathbb{N}$ form a group, called the \textit{symmetric group} $S_k$.

Our synthetic learning problem discussed in \Cref{sec:word-problem} will be constructed based on a classical problem in computer science.
\begin{definition}[Word Problem]
Let $M^\ast$ denote the set of all sequences over elements of $M$.
The \textit{word problem} on a monoid $\func{M, \circ}$ is defined by the function 
\begin{align}
    \text{WP}: M^*&\longrightarrow M \nonumber\\
    \text{WP}\func{m_0, m_1\dots,m_k}&\mapsto m_0\circ m_1 \circ \dots \circ m_k \,,
\end{align}
i.e. a word over $M$ is resolved by composition to a single element in $M$.
\end{definition}
The central question for our experiment will be which kinds of circuits can solve the word problem for arbitrary sequence lengths.
\begin{theorem}[\cite{barrington1989nc1}]
    The word problem for any fixed non-solvable group $G$ is complete for \nc under \ac reductions.
    \label{thm:barrington}
\end{theorem}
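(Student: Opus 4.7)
The plan is to prove both directions of completeness. For containment in \nc, to evaluate $m_0\circ m_1\circ\cdots\circ m_k$ I would use a balanced binary product tree of depth $O(\log k)$; since $|G|$ is fixed, each node performs one constant-size group multiplication implementable by a constant-size bounded-fan-in Boolean subcircuit, yielding a log-depth polynomial-size circuit family.

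For \nc-hardness I would follow Barrington's commutator construction. First invoke the characterization of \nc as languages recognized by log-depth fan-in-two Boolean formulas over $\{\wedge,\vee,\neg\}$. Then introduce the central gadget: a \emph{permutation program} of length $L$ is a sequence $(i_j,a_j,b_j)_{j=1}^L$ with $i_j\in[n]$ and $a_j,b_j\in G$ that on input $x\in\{0,1\}^n$ outputs $\prod_j c_j(x)$, where $c_j(x)=a_j$ if $x_{i_j}=0$ and $c_j(x)=b_j$ otherwise. Say the program \emph{$\alpha$-computes} $f$ if its output is $\alpha$ on accepting inputs and $e$ on rejecting ones. Four operations drive the induction on formula depth: conjugating every entry by $g$ turns an $\alpha$-program into a $g\alpha g^{-1}$-program of the same length; reversing the program and inverting each entry produces an $\alpha^{-1}$-program; a local tweak flips $\alpha$-computing $f$ to $\alpha$-computing $\neg f$ (so OR follows by De Morgan); and if $P$ $\alpha$-computes $f$ and $Q$ $\beta$-computes $g$, the concatenation $PQP^{-1}Q^{-1}$ $[\alpha,\beta]$-computes $f\wedge g$, because only the assignment $f=g=1$ fails to telescope and leaves the commutator $\alpha\beta\alpha^{-1}\beta^{-1}$.

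The hardest part is closing this recursion, and this is exactly where non-solvability of $G$ enters. If each AND gate required a freshly nested commutator target, the induction would collapse in a solvable group, whose derived series terminates at $\{e\}$. Non-solvability of $G$, by contrast, forces the derived series $G\supset[G,G]\supset[[G,G],[G,G]]\supset\cdots$ to stabilize at a non-trivial perfect subgroup $H=[H,H]$. I would fix once and for all a single witness $\alpha_0\in H\setminus\{e\}$ together with $\beta_0,\gamma_0\in H$ and conjugators $g_\beta,g_\gamma\in G$ satisfying $\beta_0=g_\beta\alpha_0 g_\beta^{-1}$, $\gamma_0=g_\gamma\alpha_0 g_\gamma^{-1}$, and $[\beta_0,\gamma_0]=\alpha_0$; the substantive algebraic content is that such a configuration exists in any non-trivial perfect group. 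The conjugation gadget then retargets a child $\alpha_0$-program to $\beta_0$ or $\gamma_0$, and the commutator gadget glues them back into a parent $\alpha_0$-program. The length recurrence $L(d)\le 4L(d-1)+O(1)$ gives $L(O(\log n))=\mathrm{poly}(n)$.

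Finally, I would check that the reduction is computable in \ac. On input $x$, the word over $G$ is built by a purely local substitution along the (input-independent) formula tree: indices, group constants, conjugators, and bracket structure are hard-wired into the fixed formula and the fixed finite group $G$, and only the choice between $a_j$ and $b_j$ consults $x_{i_j}$. An \ac circuit emits the entire word in parallel; appending the constant $\alpha_0^{-1}$ converts the acceptance condition "product equals $\alpha_0$" into "product equals $e$", matching the standard word-problem instance. Combined with the containment argument, this yields \nc-completeness under \ac reductions.
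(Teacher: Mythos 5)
The paper itself offers no proof of this statement---it is imported verbatim from Barrington (1989)---so your proposal can only be compared with the classical argument, most of which you reconstruct correctly: the balanced product tree for membership in \nc, the notion of a group program that $\alpha$-computes a formula, the conjugation, inversion, negation and commutator gadgets, the length recursion $L(d)\le 4L(d-1)+O(1)$ over a log-depth formula, and the observation that the final substitution (together with the appended constant $\alpha_0^{-1}$) is computable in \ac are all sound.

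The gap sits exactly where you place ``the substantive algebraic content.'' You require a single non-identity $\alpha_0$ in the terminal derived subgroup $H=[H,H]$ together with conjugates $\beta_0,\gamma_0$ of $\alpha_0$ satisfying $[\beta_0,\gamma_0]=\alpha_0$, and you assert that such a configuration exists in every non-trivial perfect group. For Barrington's concrete case this is an easy explicit check (in $S_5$ a $5$-cycle is a commutator of two $5$-cycles; in $A_5$ one has $[(123),(145)]=(124)$ with all $3$-cycles conjugate), but for an arbitrary fixed non-solvable $G$ it is a genuinely non-trivial group-theoretic claim that you neither prove nor cite: a minimal-counterexample attempt reduces to perfect groups all of whose proper subgroups are solvable, and lifting a configuration from a simple quotient of $H$ only yields $[\beta_0,\gamma_0]=\alpha_0 n$ for some kernel element $n$, so the identity does not lift exactly. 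The standard proof of the general non-solvable case avoids this lemma altogether: strengthen the induction hypothesis to ``every depth-$d$ subformula admits a $\beta$-program of length at most $C^{\,d}$ for every $\beta\in H\setminus\{e\}$,'' handle negation by reversal-inversion followed by an appended constant, and for an AND gate with target $\beta$ write $\beta$ as a product of at most $k_0$ commutators of non-identity elements of $H$ (possible with $k_0$ depending only on $G$, since $H=[H,H]$ and $H$ is finite), then concatenate the $k_0$ commutator blocks; the recurrence becomes $L(d)\le 4k_0\,L(d-1)+O(1)$, which is still polynomial at depth $O(\log n)$. With that replacement---or with a proof or reference for your single-witness lemma---your argument is complete.
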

\paragraph{Algebra of FSMs}
Tracking the state of a system can be formalized as executing a finite state machine~\cite{merrill2024illusion}.
To characterize the limits of certain FSMs, we define a few formal concepts.
\begin{definition}[FSM]
    A \textit{finite state machine} (FSM) consists of a finite set of states $Q$, a finite set of input symbols $\Sigma$ called the alphabet, and a transition function ${\delta: Q\times \Sigma\longrightarrow Q}$.
\end{definition}
Given an initial state $q_0\in Q$ and a sequence of symbols $w=a_1a_2\dots a_k\in\Sigma^*$, a FSM transitions from the initial state into a final state.

Finite state machines naturally define a monoid.
\begin{definition}[Syntactic Monoid]
    For each symbol $a\in\Sigma$, define the function ${\delta_a: Q\longrightarrow Q}$.
    The transformation monoid $M$ generated by $\delta_a, a\in\Sigma$ and the composition of functions $\circ$, is called the \textit{syntactic monoid} $\func{M, \circ}$ of the finite state machine.
\end{definition}
The algebraic structure of $M$ is tightly coupled to the programs that the original FSM can execute.
Our investigation is based on the classical result stated in \Cref{thm:barrington}.
The simplest example of a non-solvable group is the permutation group of five elements $S_5$. 
A corollary from theorem~\ref{thm:barrington} is that the FSM whose syntactic monoid is $S_5$ is complete in \nc and hence in the class of regular languages.
We have thus identified a \textit{hard} state tracking problem: Permutations of five elements.

Another classical result tightly related to state-space models is
\begin{theorem}[\cite{SCHUTZENBERGER1965190}]
    Let $L$ be the regular language defined by a FSM, and let $M$ be the syntactic monoid of the same FSM.
    Then $L$ is a star-free language if an only if $M$ is aperiodic.
    \label{thm:schutzenberger}
\end{theorem}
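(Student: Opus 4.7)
The plan is to prove the two directions separately, noting that they have very different characters: the ``only if'' direction (star-free implies aperiodic syntactic monoid) admits a relatively clean structural induction, while the ``if'' direction (aperiodic syntactic monoid implies star-free) is the deep half of Sch\"utzenberger's theorem and requires induction on the size of the syntactic monoid together with substantial use of its algebraic (Green's-class) structure.

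For the easy direction, I would show that the class of languages with aperiodic syntactic monoid is closed under the operations used to build star-free expressions, starting from the generators. First, I would verify that finite languages (in particular $\emptyset$ and each $\{a\}$ for $a \in \Sigma$) have aperiodic syntactic monoids---these monoids are essentially nilpotent modulo the identity, so every element stabilizes under repeated powers. Closure under Boolean operations then follows because if $L_1, L_2$ are recognized by aperiodic monoids $M_1, M_2$, any Boolean combination is recognized by the product $M_1 \times M_2$, which is aperiodic since stabilization of powers passes componentwise; the syntactic monoid, as a quotient, inherits aperiodicity. Closure under concatenation is the subtler step; the standard route is via the Sch\"utzenberger product construction, where the monoid recognizing $L_1 L_2$ is shown to inherit aperiodicity from its factors. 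Together these closures imply that every star-free language has an aperiodic syntactic monoid.

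For the hard direction, I would proceed by strong induction on $|M|$. Let $\eta: \Sigma^* \to M$ be the syntactic morphism, and write $L = \eta^{-1}(P)$ for some $P \subseteq M$; it suffices to show that each fiber $\eta^{-1}(m)$ is star-free. The base case $|M| = 1$ gives $L \in \{\emptyset, \Sigma^*\}$, and $\Sigma^* = \overline{\emptyset}$ is star-free. For the inductive step, I would analyze $M$ via its Green's $\mathcal{J}$-classes, working from the lower classes upward, and for each $m$ express $\eta^{-1}(m)$ as a Boolean combination of marked concatenations of the form $\eta^{-1}(A) \cdot a \cdot \eta^{-1}(B)$ with $a \in \Sigma$ and $A, B$ living inside strictly smaller aperiodic submonoids (stabilizers or local monoids arising from the $\mathcal{J}$-class decomposition). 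The decisive algebraic lever is that in a finite monoid aperiodicity is equivalent to all subgroups being trivial, so no element of $M$ exhibits genuine cyclic behavior---there is no nontrivial ``rotation'' that would force the use of Kleene star to describe the corresponding fiber, and each reduction step lands in a strictly smaller aperiodic monoid to which the inductive hypothesis applies.

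The main obstacle is the hard direction, specifically producing the explicit star-free decomposition of each fiber $\eta^{-1}(m)$ from the Green's structure of $M$ and verifying that all submonoids arising in the induction remain aperiodic and strictly smaller. This is the technical core of Sch\"utzenberger's original argument, and the guiding intuition---that triviality of subgroups algebraically rules out precisely the cyclic behavior that Kleene star produces linguistically---is what makes the correspondence natural rather than coincidental.
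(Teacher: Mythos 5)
There is no in-paper proof to compare against: Theorem~2 is Schützenberger's 1965 theorem, imported by citation as background for the word-problem construction, so your attempt can only be judged on its own terms. On those terms, your plan follows the standard route, and the easy direction is essentially right modulo routine verification: Boolean combinations recognized by the direct product, preservation of aperiodicity under products and division (hence by the syntactic monoid), and the Schützenberger product for concatenation are exactly the correct tools.

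The hard direction, however, contains a genuine gap: the step ``express $\eta^{-1}(m)$ as a Boolean combination of marked concatenations with $A$, $B$ living in strictly smaller aperiodic submonoids'' \emph{is} the content of the theorem, and you assert it rather than construct it. Concretely, two things are missing. First, an explicit decomposition: in the classical argument one proves, using stability in finite monoids together with $\mathcal{H}$-triviality (equivalently $x^{n}=x^{n+1}$ for large $n$), that $\eta^{-1}(m)=\bigl(Y\Sigma^{*}\cap\Sigma^{*}Z\bigr)\setminus\Sigma^{*}W\Sigma^{*}$, where $Y$, $Z$, $W$ are finite unions of languages of the form $\eta^{-1}(n)\,a$ with $n$ strictly above $m$ in the $\mathcal{J}$-order, and the induction runs on $\mathcal{J}$-depth rather than directly on $|M|$; this lemma is where aperiodicity is actually used, and nothing in your sketch substitutes for it. Second, if you instead induct on $|M|$ via ``stabilizers or local monoids'' (the local-divisor style proof), you must verify that the smaller monoid, e.g.\ $cMc$ with its induced multiplication, is both strictly smaller and again aperiodic, and you must explain how membership in $L$ is reconstructed from star-free languages over the derived alphabet --- none of which is automatic. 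The observation that aperiodicity is equivalent to all subgroups being trivial is the right intuition, but by itself it produces no star-free expressions; until one of these constructions is carried out, the induction does not close.
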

It is intuitive that the word problem for finite aperiodic monoids is in \tc. 
The maximal depth of the circuit is driven by the number of elements of the monoid and it's maximal $k$ for the aperiodicity condition.
Empirical studies have shown that transformers and SSMs can simulate a range of regular languages~\cite{deletang2023neural,strobl2024formal}, but they struggle to learn the $S_5$ word problem in line with their characterization as \tc circuits~\cite{merrill2024illusion}.
SSMs can be further restricted to the star-free languages~\cite{sarrof2024expressive}, i.e. those with aperiodic syntactic monoid.
\section{Relaxed Non-Solvable Word Problems}
\label{sec:appendix-word-problem}
The $A5$ or $S5$ word problems used in \cite{merrill2024illusion} require to keep track of state for every single token.
However, entities appear sparsely in natural language as shown in \Cref{tab:catbabi-entities}.
Consequently, tracking state of every token is not an accurate representation of the state tracking problem in natural language.
Here, we devise a synthetic task to model tracking state of sparse entities.
Let $p$ the sparsity of entities.
For example $p=0.9$ refers to \SI{90}{\percent} sparsity, i.e. an entity density of \SI{10}{\percent}.
An example of a sparse sequence is presented in \Cref{fig:mixed-word-problem}. 
In particular, we are interested in answering the following questions
\begin{enumerate}
    \item Which density of entities is required to learn the intrinsic algorithm of the state tracking task?
    \item How many self-iterations do our implicit models need to conduct to learn the intrinsic algorithm?
\end{enumerate}
\begin{table}[t]
  \centering
  \small
  \begin{minipage}{0.45\linewidth}
    \centering
    \begin{tabularx}{\linewidth}{@{}rX@{}}
      1 & Yesterday Mary travelled to the kitchen.\\
      2 & Fred went to the park yesterday.\\
      3 & This morning Fred journeyed to the bedroom.\\
      4 & Bill went back to the bedroom yesterday.\\
      5 & \textit{Q}: Where was Fred before the bedroom?\\
    \end{tabularx}
  \end{minipage}\hfill
  \begin{minipage}{0.45\linewidth}
    \centering
    \begin{tabularx}{\linewidth}{@{}rX@{}}
      1 & Yesterday Bill went to the school.\\
      2 & Mary went to the cinema yesterday.\\
      3 & This morning Julie went back to the kitchen.\\
      4 & Yesterday Julie went to the office.\\
      5 & \textit{Q}: Where was Julie before the kitchen?\\
    \end{tabularx}
  \end{minipage}
  \caption{Two examples of tracking entities in natural language from the \textsc{catbAbi} dataset~\cite{schlag2020learning}.
  }
  \label{tab:catbabi-entities}
\end{table}

\begin{wrapfigure}{r}{0.5\linewidth}
    \centering
    \includegraphics[width=\linewidth]{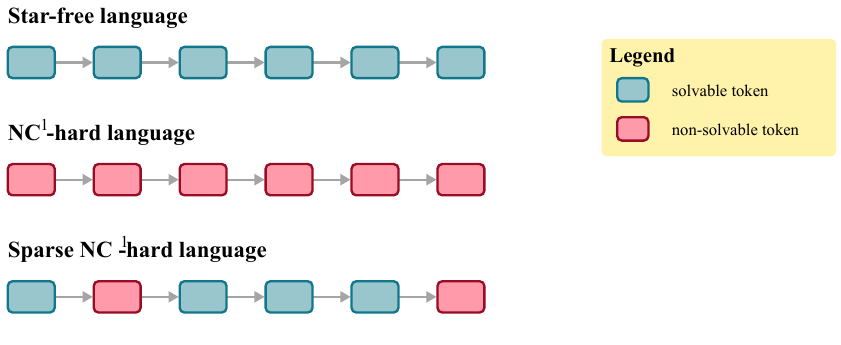}
    \caption{
        Examples of word problems with mixed solvable and non-solvable tokens.
        We refer to tokens from a non-solvable group as non-solvable tokens, 
        and to tokens from a star free language as solvable tokens.
    }
    \label{fig:mixed-word-problem}
\end{wrapfigure}
To construct a learning problem for sequence models, we represent each element of the monoid $M$ as a token, and present a sequence $m_0, \dots, m_L$ of tokens to the model.
The ground truth at each position $k=1,\dots,L$ is the token representing the element ${m_0 \circ \dots \circ m_k}$.
We then calculate the mean cross entropy loss over the entire sequence, providing a learning signal at each step $k=1, \dots, L$.

State-space models can learn the word problem for aperiodic monoids~\cite{sarrof2024expressive}, but fail so solve it for non-solvable groups such as $S_5$~\cite{merrill2024illusion}.
We confirm in \Cref{fig:opener} that implicit state-space models can in fact learn the word problem for $S_5$.
We now want to \textit{interpolate between word problems for aperiodic and non-solvable monoids} to test how much signal our implicit state-space model defined in ~\Cref{sec:implicit-models} needs from the hard non-solvable group word problem to learn it.

Let $M = M^{a} \times G$ be a direct product of an aperiodic monoid $M^{a}$ and a non-solvable group $G$.
A sequence $m_0, \dots, m_T$ is sampled from $M$ with replacement.
To control the number of hard examples and simple examples, we define a family of distributions \distribution{p} over $M$ as follows.
An element $m^{a}_k\in M^{a}$ is sampled uniformly at each step $k$, representing the presence of simple examples.
On the other hand, we sample elements $g_k\in G\setminus \{e\}$ from $G$ without the identify transformation, each with probability $\frac{p}{\left|G\right|-1}$. 
The identity element $g_k=e\in G$ is sampled with probability $1-p$.
The resulting transformations $\func{m^{a}_k, g_k}$ are aperiodic at least when $g_k = e$, i.e. with probability $1-p$.

We'll call the tokens representing $\func{m, e}$ \textit{simple tokens} and the tokens representing $\func{m, g}$ with ${g\neq e}$ are called \textit{hard tokens}.
The names derive from the fact that SSMs can resolve the word problem if it is only composed from simple tokens. 
Any non-zero probability $p$ of sampling hard tokens renders the word problem unsolvable for fixed depth SSMs on arbitrarily long sequences.

Our construction of a distribution over a monoid allows us to test out-of-distribution generalization not only in terms of length generalization, the most common setting in the literature.
By changing $p$ between training time and test time, we construct training tasks and evaluation tasks with varying difficulty.
This effectively offers OOD evaluation with the same number of tokens, but different mixtures of easy and hard tokens.
While this property allows us to distil expressivity questions from length generalization properties, 
our construction is not limited to the same sequence length and could as well be used in the length generalization setup (see \Cref{fig:model-comparison}).
\section{Experimental Details}
\subsection{The Word Problem}
Each data point in \Cref{fig:word-problem} and \Cref{fig:model-comparison} is based on \num{10} runs with different random seeds.
We report the best run, mean accuracy and a \SI{95}{\percent} confidence interval for each data point.
All word problem models were trained on sequences of length $L=256$, and a batch size of 512 on \SI{32}{\giga\byte} V100s.
The explicit models and self-iterated models with full backpropagation trace required gradient accumulation over two steps.
The learning rate is set to \num{0.001}.
We disable dropout and weight decay, which appears to harm learning on the word problem.
The self-iterations are stopped upon convergence, which we define as a relative difference between two consecutive states of \num{0.01} for \Cref{fig:word-problem} or \num{0.05} for \Cref{fig:model-comparison}.
We trained a number of standard Mamba2 models with the same number of runs for multiple numbers of layers.
These models struggle to capture the training distribution compared to self-iterated models, and none of them was able to generalize to a harder distribution or to longer sequences.
\begin{figure}
    \centering
    \includegraphics[width=0.8\linewidth]{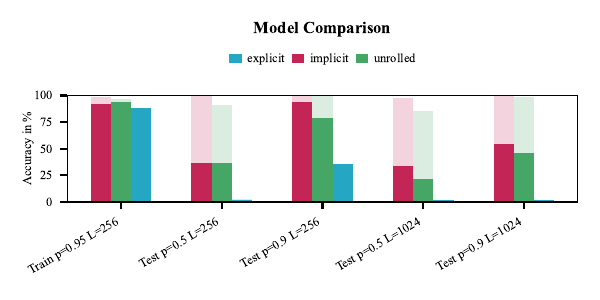}
    \caption{
        Comparison of implicit Mamba2, unrolled Mamba2, and Mamba2 for $p=0.05$.
        All models were trained and evaluated on sequences of length $L=256$.
        Unrolled Mamba2 refers to a single layer being unrolled multiple times with a full backpropagation trace, while the implicit Mamba2 receives only \num{4} steps of Phantom Gradient.
        The training time depth of all models is limited to $16$, i.e. $16$ layers for Mamba2, and $16$ self-iterations for the implicit and weight tied models.
        Implicit and unrolled models use unbounded test-time computation to converge to a fixed point.
        The comparison shows that the implicit model with $4$ steps of Phantom Gradient succeeds over the unrolled model.
        }
    \label{fig:model-comparison}
\end{figure}
\subsection{CatbAbI}
\label{sec:details_catbabi}
The models, both implicit and explicit, comprise up to three layers of the Mamba2 architecture with an embedding dimension of 256, a state dimension of 16 (expansion factor 2), and a head dimension of 32. We trained the models using batch sizes of 128 and 256, and learning rates of 0.0001, 0.0005, 0.001, and 0.005. The models were trained for 15,000 steps, with the implicit model specifically trained for 5,000 steps in unrolling mode, utilizing 32 steps with normal gradient checkpointing, followed by 10,000 steps of self-iteration fixed-point search. The self-iteration included a stop threshold of 0.03 and a training and testing maximum number of steps 50 and 200 , respectively, and phantom gradient parameters of 6 steps with ($\lambda = 0.5$). Data were packed in sequences of length 200 tokens as per \cite{schlag2020learning}. Figures \ref{fig:catbabi-explicit_Valid acc_plot} and \ref{fig:catbabi-implicit_Valid acc_plot} illustrate the validation accuracy of the explicit and implicit Mamba2 models on the \textsc{CatbAbI}  dataset, respectively. Additionally, \Cref{fig:catbabi-implicit_Valid steps_plot} plots the number of iterations required for the implicit model to reach a fixed point on the validation set of the \textsc{CatbAbI} dataset.

\begin{figure}[h!]   
  \centering       
  \begin{adjustbox}{trim=0cm 0.0cm 0cm .8cm,clip}    
    \includegraphics[width=0.8\textwidth]{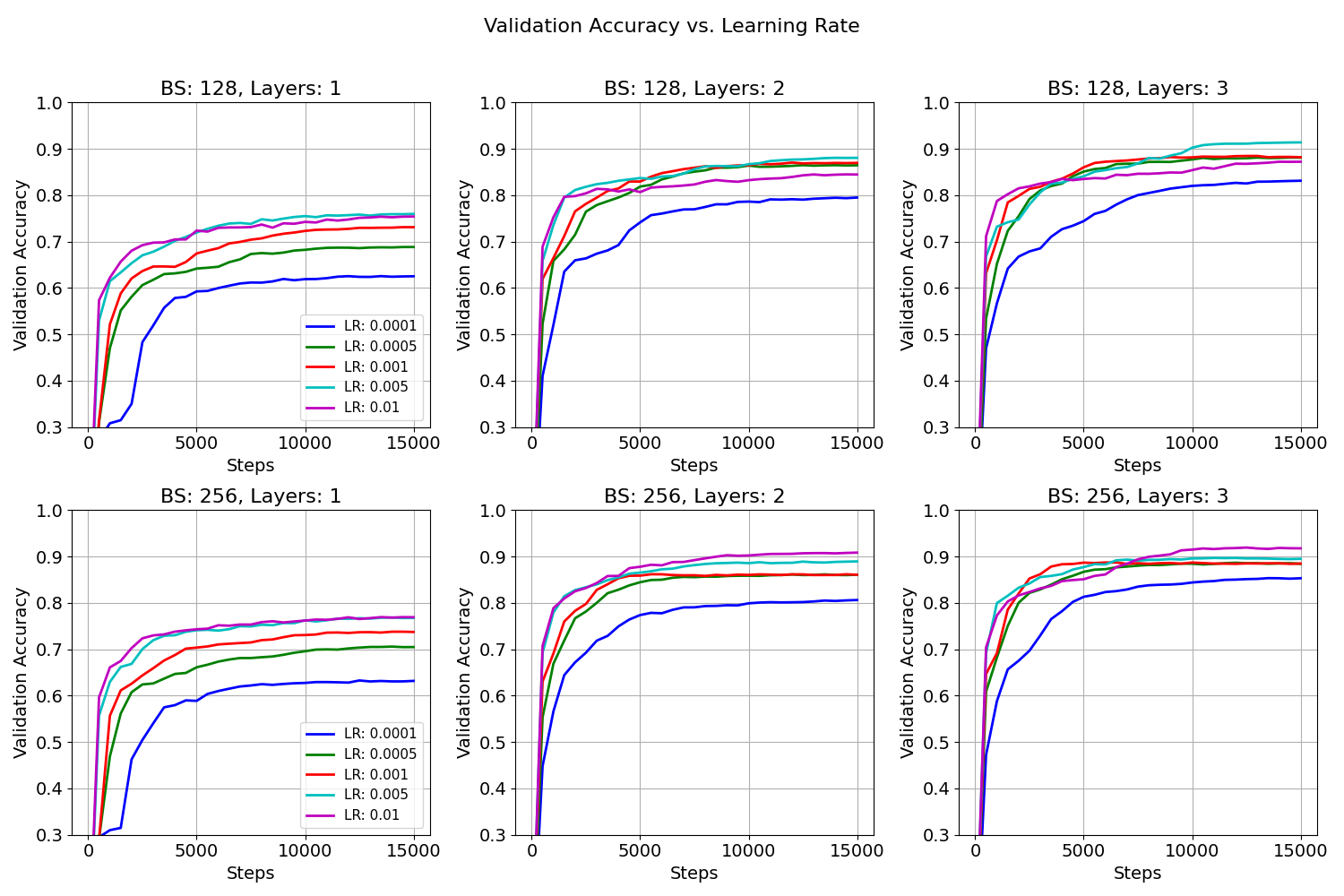} 
  \end{adjustbox}    
  \caption{Hyperparameter sweeps for explicit Mamba2 models over batch sizes 128, 256, layers 1,2,3 and various learning rates for training on the \textsc{CatbAbI}  dataset.}    
  \label{fig:catbabi-explicit_Valid acc_plot}    
\end{figure} 

\begin{figure}[h!]   
  \centering       
  \begin{adjustbox}{trim=0cm 0.0cm 0cm 0.8cm,clip}    
    \includegraphics[width=0.8\textwidth]{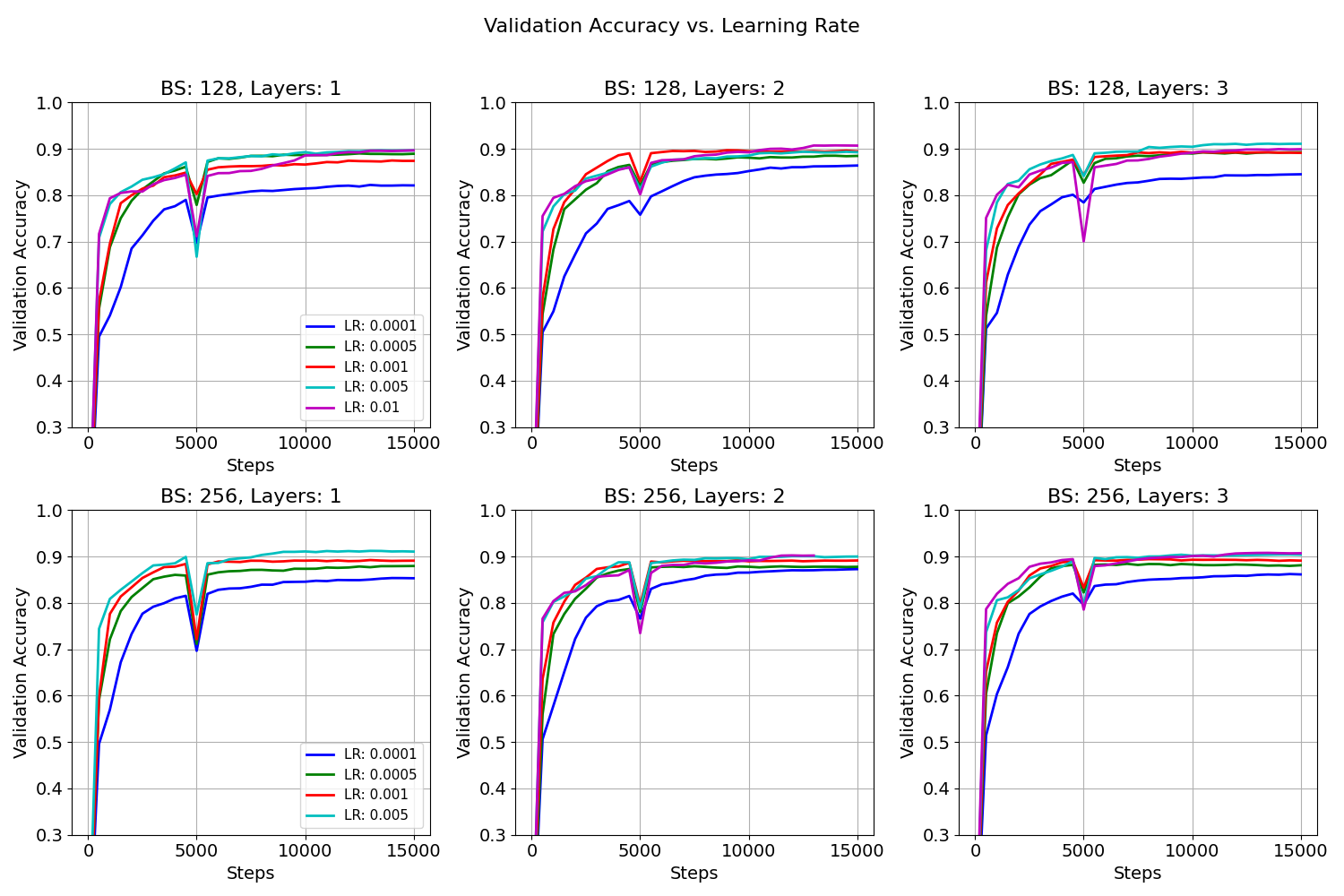}
  \end{adjustbox}    
  \caption{Hyperparameter sweeps for implicit Mamba2 models over batch sizes 128, 256, layers 1,2,3 and various learning rates for training on the \textsc{CatbAbI} dataset.}    
  \label{fig:catbabi-implicit_Valid acc_plot}    
\end{figure} 

\begin{figure}[h!]   
  \centering       
  \begin{adjustbox}{trim=0cm 0.0cm 0cm .8cm,clip}    
    \includegraphics[width=0.8\textwidth]{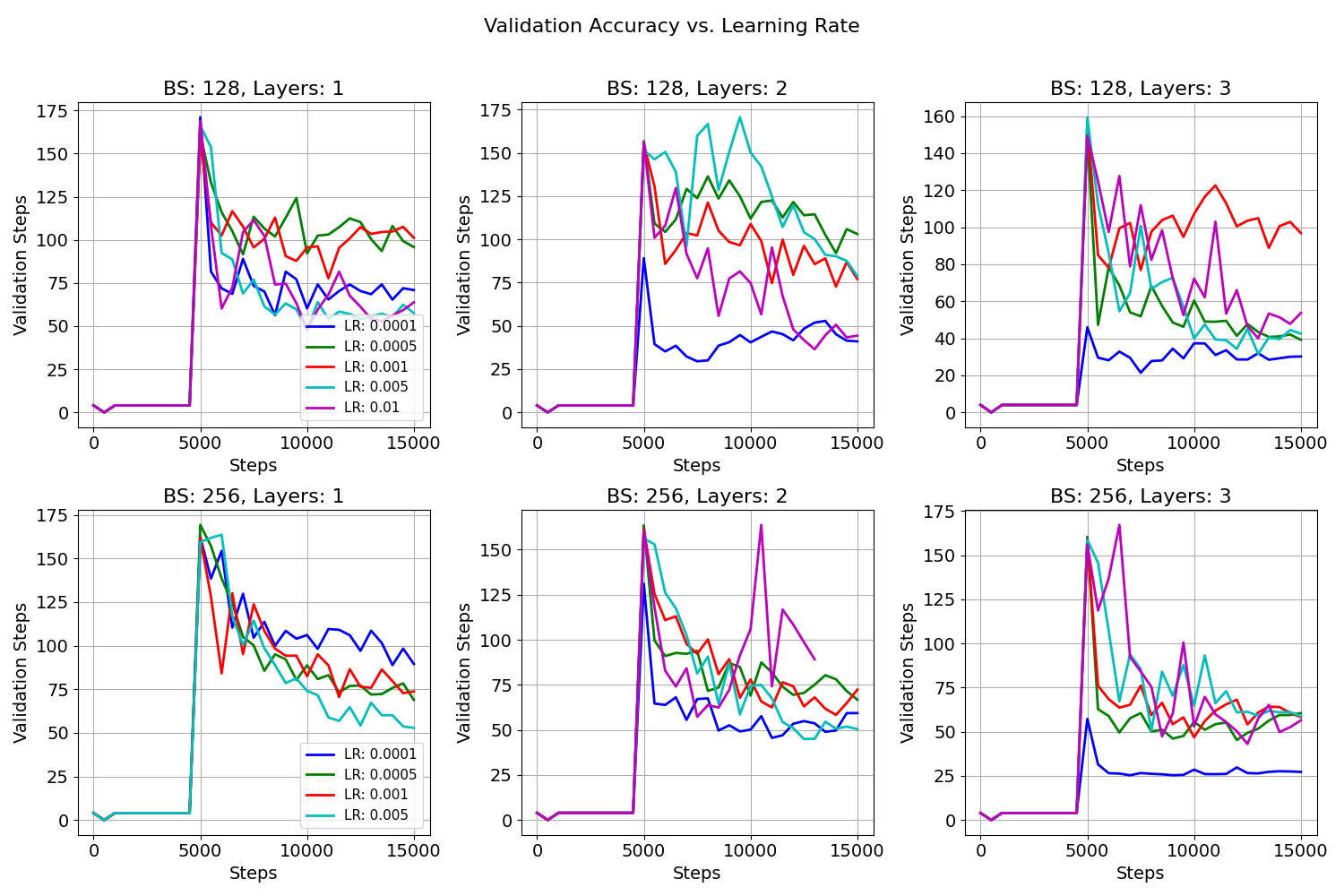}
  \end{adjustbox}    
  \caption{Hyperparameter sweeps for implicit Mamba2 models over batch sizes 128, 256, layers 1,2,3 and various learning rates for training on the \textsc{CatbAbI}  dataset.}    
  \label{fig:catbabi-implicit_Valid steps_plot}    
\end{figure}
\begin{figure*}[h!]      
  \centering      
  \begin{minipage}[b]{0.85\textwidth} 
  \includegraphics[width=\linewidth]{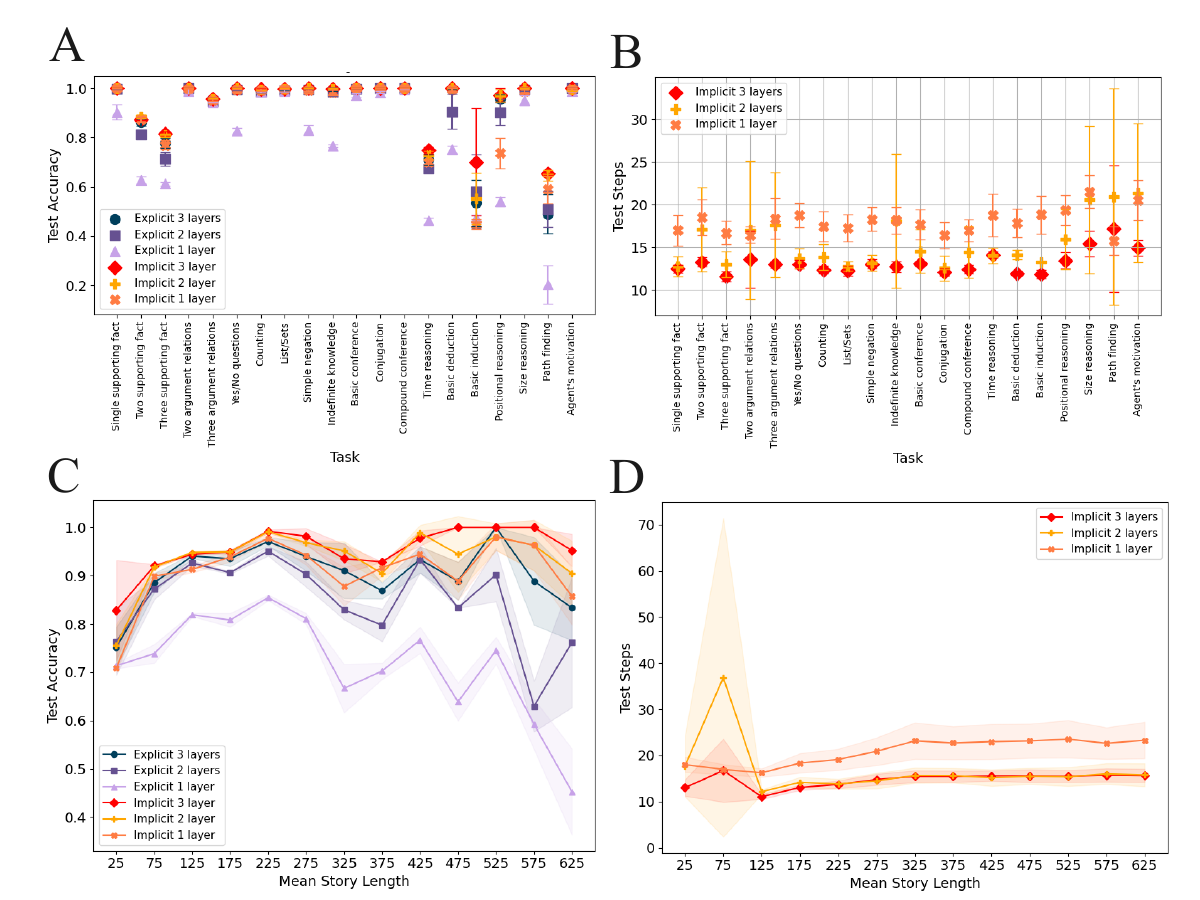}
\caption{Small-scale reasoning advantages of implicit SSMs compared to explicit SSMs on the \textsc{CatbAbI}  dataset. (\textbf{a}) Task-specific performance comparison, measured in accuracy of the models in predicting answers to questions within a story, between the implicit Mamba2 and baseline Mamba2. The one-layer implicit Mamba2 model outperforms the one-layer explicit Mamba2 on most tasks. As the number of layers in the explicit Mamba2 increases, its performance approaches that of the implicit Mamba2. Adding more layers to the implicit Mamba2 benefits certain tasks, such as 'Basic induction' or 'Path finding,' where the implicit Mamba2 achieves the best performance. (\textbf{b}) The correlation between the number of self-iteration steps that the implicit Mamba2 takes to solve a task and the number of layers in the implicit model's backbone architecture, showing a decrease in the required steps with additional layers. (\textbf{c}) The implicit Mamba2 retains its performance as the story length increases, whereas the explicit Mamba2's performance declines. (\textbf{d}) The trend in the number of iterations needed by the implicit Mamba2 models as story length increases, indicating a modest rise in computational steps.}    
    \label{fig:catbabi-performance}    
  \end{minipage}      
\end{figure*} 

\subsection{Language Modeling}

\paragraph{Pretraining Details}
\label{sup:Pretraining Details}
We have trained a suite of Implicit SSM models with the core architecture of Mamba2 and Implicit Transformer models with the core of Llama3. For each implicit model, we have a corresponding weight-tied model that is also trained on the entire \textsc{D-Pile} dataset. We use the checkpoint from 80 percent of the way through training the weight-tied model to train the fully implicit model. We use four scales for the training of the models: 1.3B, 760M, 360M, and 130M. In all models, the LLM head weights are tied to the embedding weights. The implicit and weight-tied models have the same architecture as those of the Mamba2 and Llama models, except for an injection module, consisting of an MLP, which transforms the input into the domain of the mixer latent space. This module has a constant size equivalent to $2 \times d_{emb} + 2 \times d_{state} + n_{heads}$ in Mamba2, matching with $d_{in_{proj}}$, and $3 \times d_{emb}$ in Llama models, corresponding to the key, value, and queries, and is shared across all layers of the model. Details for each model is provided in Table \ref{tab:architecture-detail}.

We followed the training recipe of Mamba2 and Llama models. In particular, we used a weight decay of 0.1, no bias for the LLM head, AdamW hyperparameters $\beta = (0.9, 0.95)$, RMSNorm instead of LayerNorm, and a linear warm-up step to the peak learning value, which is chosen as 5 times the value of the GPT-3 model. For the learning rate scheduler, we used a constant learning rate followed by a square root decay to a minimum value of $10^{-5}$ \cite{hagele2024scaling}. While this scheduling has also been shown to be compute-optimal \cite{hagele2024scaling} alongside the cosine scheduling, it allows us to use intermediate checkpoints during training more conveniently without considering how the new learning rate affects the training of implicit models. All models were trained with an effective batch size of 1M tokens and a training sequence length of 2048 tokens.

\begin{table}[h]  
\small
    \centering  
    \caption{Architecture and pretraining details}  
    \label{tab:pretraining-details}  
    \begin{tabular}{l l l l l l l}  
        Params. & SSM/Transformer \# layers & Dim. Emb. & \# Heads/ Dim. & Training Steps & Peak Learning Rate & Batch Size (Tokens) \\
        \hline  
        1.3B & 48/24 & 2048 & 32/64 & 197701 &  0.001 & 1M \\  
        760M & 48/24 & 1536 & 16/96 & 197701 & 0.00125& 1M  \\  
        350M & 48/24 & 1024 & 16/64 & 197701 & 0.0015 & 1M \\  
        130M & 24/12 & 768 & 12/64 & 197701 & 0.003 & 1M \\  
    \end{tabular} 
    \label{tab:architecture-detail}
\end{table}

\paragraph{Downstream Task Details} We evaluated our models as well as the baseline models on seven tasks: \textsc{Lambada\_openai} \cite{paperno2016lambada}, \textsc{Hellaswag} \cite{zellers2019hellaswag}, \textsc{Piqa} \cite{bisk2020piqa}, \textsc{Arc-easy} and \textsc{Arc-challenge} \cite{clark2018think}, \textsc{Winogrande} \cite{sakaguchi2021winogrande}, and \textsc{OpenbookQA} \cite{mihaylov2018can}. We used the model checkpoints on the 207B tokens of the \textsc{D-Pile}. For the evaluation of the downstream tasks, we utilized the LM Evaluation Harness package ~\citep{eval-harness} with the default specifications, i.e., a batch size of 256 and a sequence length of 2048. All models were evaluated using one H100 80GB GPU.

\paragraph{Curriculum-Based Training of Language Models}
\label{Curriculum-Based Training of Language Models}
The training of implicit models is achieved through a curriculum training framework that is divided into two main phases: bounded phase and the free phase. In the bounded phase, the models are subjected to four steps of self-iteration, followed by a singular phantom gradient step to store the activations necessary for backpropagation. This phase of training involves 80 percent of the dataset (the choice of this proportion and its influence on model performance are discussed below). Training then progresses to the free phase, wherein the model undergoes additional fixed point searches, capped at 24 self-iterations for SSMs and 32 for Transformers, and is followed by four phantom gradient iterations. A stopping criterion of $\epsilon=0.05$ is implemented during this second phase, allowing models to terminate the fixed point search once this threshold is met. For validation and testing on the \textsc{D-Pile} dataset, the limit on fixed-point iterations is set to four times the number used in the free phase of training. The learning rate reduction starts in phase two; for Transformers, this reduction begins immediately to prevent instability caused by their high spectral norm. Conversely, for SSMs, the learning rate cooldown starts after 90 percent of the overall training period has elapsed, due to their spectral norm being approximately one, which permits more substantial weight adjustments at a higher learning rate.

\paragraph{Duration of Bounded and Free Phases in Training Language Models}
\label{appendix-phase-one-percentage}
In our exploration of the optimal duration for bounded and free phases of training, we aimed to find a balance between computational efficiency and the necessary nonlinear transitions each token must undergo through self-iteration. We tested models trained with 70, 80, and 90 percent of the bounded phase duration before starting the full fixed-point search in the free phase---refer to Fig. \ref{fig:cooldown-ratio-deq-spec}a. For this evaluation, we used a 130M Mamba2 model. Based on the model's perplexity on the validation split of \textsc{D-Pile} (2M examples of length 2048), we observed that beyond a certain extent of free phase training, the model's performance plateaus or overfits. We determined that 20 to 10 percent of free phase training is optimal. Consequently, we applied 20 percent free phase training for the training of larger models. It is crucial to note that the 130M model, when trained with an effective batch size of 0.5M tokens, experienced overfitting. This overfitting was not present when we increased the batch size to 1M tokens, ---see Fig. \ref{fig:cooldown-ratio-deq-spec}b. Thus, we adopted a 1M token batch size for the training of models across all scales.

\paragraph{Specification of Fixed Point Solver for Language Model Training}
\label{Specification of Fixed Point Solver for Language Model Training}
Our experimentation with the fixed point solver in the free phase involved adjusting various parameters, including the maximum number of self-iterations for the fixed point search (16, 24, 32) and the number of gradient accumulation steps (2, 4). The 32 iterations have a smaller stop threshold of 0.02, whereas the 16 and 24 iterations have a stop threshold of 0.05-----see Fig. \ref{fig:cooldown-ratio-deq-spec}b. We used a 130M Mamba2 model for this evaluation with an effective batch size of 1M tokens. We measured the validation split perplexity during training. For the training of the implicit Transformer models, we used a maximum self-iteration cap of 32. This was due to the higher spectral norm of the Transformer models requiring more steps to reach a fixed point below the threshold.

\begin{figure*}[h!]      
  \centering      
    \includegraphics[width=0.8\textwidth]{}
    \caption{\textbf{Left}: Impact of Bounded Phase (4+1) Training Duration on Model Performance: A comparison of perplexity on the validation split of the \textsc{D-Pile} obtained by 130M Mamba2 models trained with 70\%, 80\%, and 90\% bounded phase durations, with a batch size of 0.5M tokens. \textbf{Right}: Evaluation of Fixed Point Solver Specifications: The relationship between different maximum iterations (16, 24, 32) and gradient accumulation steps (2, 4) on the validation split perplexity of the \textsc{D-Pile} for the 130M Mamba2 model with a batch size of 1M tokens.}   
    \label{fig:cooldown-ratio-deq-spec}    
\end{figure*} 

\paragraph{Resource Allocation for Training and Evaluating Large Language Models}
We trained our suite of models on a cluster with AMD Instinct MI300X GPUs. Each node within the cluster comprises 8 GPUs, and we employed distributed multi-node processing to train our models on up to 32 GPUs simultaneously. Table \ref{tab:gpu-hours} details the number of GPUs allocated for the training of each model, as well as the total GPU hours consumed by each. The evalution of models on downstream tasks was achieved on one 80GB Nvidia H100 GPU.

\begin{table*}[h!]     
\small  
    \centering
    \caption{GPU resource allocation and utilization for training large language models. The GPU counts employed and the total GPU hours expended for the training of each Mamba2 and Llama model variant across different parameter scales---130M, 350M, 760M, and 1.3B is listed. The models were trained using a cluster of AMD Instinct MI300X GPUs, with 8 GPUs per node, utilizing distributed multi-node processing with up to 32 GPUs in parallel.}  
    \resizebox{0.5\textwidth}{!}{%
        \begin{tabular}{lcccc}   
            \toprule        
             & \textbf{Model} & \textbf{GPU Counts} & \textbf{Total GPU Hours}\\      
            \midrule  
\multirow{6}{*}{\rotatebox{90}{130M}}&  Mamba2$^*$ & 8 &1620
\\
 & Mamba2 (4+1)  & 8 &3185.6
 \\    
 & Mamba2 (24+4)  & 8 & 2756.8
\\ 
 \cmidrule(l){2-4}
   & Llama$^\dag$  & 32 & 1146.56
\\
 & Llama (4 + 1) & 32 & 1027.2
 \\
& Llama (32+4) & 32 &  1561.6
\\
  
 \midrule    
\multirow{6}{*}{\rotatebox{90}{350M}} & Mamba2$^*$ & 32 &1516\\
 & Mamba2 (4+1)  & 8 & 2427.2
\\ 
 & Mamba2 (24+4)  & 8 & 2168.8
\\     
  \cmidrule(l){2-4}
  & Llama$^\dag$  & 32 & 1324.8
\\
& Llama (4 + 1) & 8 & 2515.2
 \\
& Llama (32+4) & 8 &  2335.2
\\  
\midrule  
\multirow{6}{*}{\rotatebox{90}{760M}}&Mamba2$^*$ & 16 &1920
\\
 & Mamba2 (4+1)  & 16 & 3932.8
\\ 
 & Mamba2 (24+4)  & 16 & 3440
\\  
  \cmidrule(l){2-4}
  & Llama$^\dag$  & 16 & 4636.8
\\
& Llama (4 + 1) & 16 &7612.8
  \\
& Llama (32+4) & 32 & 7676.8
 \\     
\midrule  
\multirow{6}{*}{\rotatebox{90}{1.3B}} &Mamba2$^*$ & 32 & 3054.4
\\ 
 & Mamba2 (4+1)  & 32 &5820.8 \\ 
 & Mamba2 (24+4) & 32 & 5084.8

 \\  
  \cmidrule(l){2-4}
   & Llama$^\dag$  & 32 & 3084.8
\\
 & Llama (4 + 1) & 32 & 6057.6
 \\ 
& Llama (32+4) & 32 &  7225.6
\\
    \midrule     
&Sum & & 83132.16\\
            \bottomrule 
        \end{tabular}        
    }    
        
    \label{tab:gpu-hours}      
\end{table*}

\tcbset{  
    mybox/.style={  
        colback=white,  
        colframe=black,  
        fonttitle=\bfseries,  
        coltitle=black,  
        colbacktitle=gray!20,  
        toptitle=1mm,  
        bottomtitle=1mm,  
        left=1mm,  
        right=1mm,  
        top=0.5mm,  
        bottom=0.25mm,  
        boxsep=0mm,  
        arc=0mm,  
        boxrule=0.5pt,  
        title filled,  
        height=2.05cm,  
        valign=top  
    }  
}  

  \begin{table}[htb!]  
    \centering  
    \caption{Sample types from the Catbabi \cite{schlag2020learning} dataset, a modified version of the bAbI dataset, categorized into 20 examples. These examples are derived from \cite{weston2015towards}.}  
    \label{tab:catbabi examples} 
\begin{tcbraster}[raster columns=2, size=minimal, raster valign=top]  
    \begin{tcolorbox}[mybox, title=Task 1: Single supporting fact]

    \label{tab: catbabi}
    \tiny  
        sandra went back to the hallway. john moved to the hallway. where is sandra? \textbf{hallway} john travelled to the bathroom. daniel travelled to the office. where is daniel? \textbf{office} john moved to the office. sandra travelled to the bathroom. where is sandra? \textbf{bathroom} daniel went back to the bedroom. daniel went back to the garden. where is daniel? \textbf{garden} sandra moved to the hallway. john went back to the bathroom. where is daniel? \textbf{garden}.  
    \end{tcolorbox}  
    \begin{tcolorbox}[mybox, title=Task 2: Two supporting fact]  
    \tiny  
    mary went back to the bathroom. john went to the office. daniel grabbed the football there. sandra travelled to the bathroom. daniel left the football. sandra moved to the bedroom. sandra journeyed to the kitchen. sandra travelled to the garden. sandra went back to the bathroom. john travelled to the kitchen. daniel moved to the garden. sandra moved to the garden. mary went to the office. daniel went to the kitchen. 
    \end{tcolorbox}  

     \begin{tcolorbox}[mybox, title=Task 3: Three supporting fact] 
    \tiny
     john went back to the bedroom . sandra got the milk . sandra discarded the milk . sandra took the milk . john went back to the office . mary moved to the bathroom . john travelled to the kitchen . daniel went back to the bedroom . john went back to the office . mary took the apple . mary travelled to the office . daniel went back to the kitchen . mary moved to the bathroom . sandra dropped the milk there . where was the apple before the bathroom ? \textbf{office} 
    \end{tcolorbox}  
    \begin{tcolorbox}[mybox, title=Task 4: Two argument relations] 
    \tiny
    the bedroom is west of the bathroom . the kitchen is west of the bedroom . what is west of the bathroom ? \textbf{bedroom}
    \end{tcolorbox}    
    \begin{tcolorbox}[mybox, title=Task 5: Three argument relations] 
    \tiny
    jeff journeyed to the garden . fred went to the office . fred went to the hallway . fred got the apple there . fred moved to the office . fred went to the kitchen . fred put down the apple . fred took the apple there . mary went back to the garden . fred travelled to the bathroom . jeff grabbed the milk there . bill went to the office . jeff journeyed to the bathroom . jeff put down the milk there . fred picked up the milk there . fred passed the apple to jeff . who gave the apple to jeff ? \textbf{fred }
    \end{tcolorbox}  
    \begin{tcolorbox}[mybox, title=Task 6: Yes/No questions] 
    \tiny
 john went to the office . sandra went to the bathroom . is sandra in the kitchen ? \textbf{no} sandra travelled to the garden . sandra went to the bedroom . is sandra in the bedroom ? \textbf{yes} sandra travelled to the garden . mary went to the kitchen . is sandra in the garden ? \textbf{yes} john grabbed the apple there . daniel journeyed to the bedroom . is sandra in the bedroom ? \textbf{no} john picked up the football there . john took the milk there . is mary in the bedroom ? \textbf{no}
    \end{tcolorbox}    
    \begin{tcolorbox}[mybox, title=Task 7: Counting] 
    \tiny
mary got the apple there . john went back to the kitchen . how many objects is mary carrying ? \textbf{one} john moved to the garden . mary left the apple there . how many objects is mary carrying ? \textbf{none} john moved to the bathroom . mary grabbed the apple there . how many objects is mary carrying ? \textbf{one} john went to the garden . mary dropped the apple . how many objects is mary carrying ? \textbf{none} sandra journeyed to the bedroom . mary moved to the kitchen . how many objects is mary carrying ? \textbf{none}
    \end{tcolorbox}  
    \begin{tcolorbox}[mybox, title=Task 8: List/Sets] 
    \tiny
 john went to the bathroom . daniel travelled to the garden . mary moved to the hallway . sandra moved to the bedroom . daniel journeyed to the hallway . mary picked up the apple there . what is mary carrying ? \textbf{apple} sandra travelled to the bathroom . mary dropped the apple . what is mary carrying ?  \textbf{daniel} moved to the office . daniel grabbed the football there . what is mary carrying ? \textbf{nothing} daniel moved to the bedroom . mary got the apple there . what is daniel carrying ? \textbf{football} 
    \end{tcolorbox}    
    \begin{tcolorbox}[mybox, title=Task 9: Simple negation] 
    \tiny
daniel is in the bathroom . sandra is no longer in the bedroom . is daniel in the kitchen ? \textbf{no} mary is no longer in the bedroom . sandra is no longer in the kitchen . is sandra in the kitchen ? \textbf{no} sandra is in the hallway . daniel is in the garden . is sandra in the hallway ? \textbf{yes} sandra is in the bedroom . sandra is in the bathroom . is sandra in the bedroom ? \textbf{no} sandra travelled to the hallway . john is in the kitchen . is sandra in the kitchen ? \textbf{no}
    \end{tcolorbox}  
    \begin{tcolorbox}[mybox, title=Task 10: Indefinite knowledge] 
    \tiny
julie is either in the bedroom or the cinema . bill journeyed to the cinema . is bill in the cinema ? \textbf{yes} bill is in the office . julie went to the school . is julie in the park ? \textbf{no} julie travelled to the office . mary is in the school . is bill in the park ? \textbf{no} mary is in the park . bill is either in the office or the kitchen . is mary in the kitchen ? \textbf{no} fred travelled to the office . mary went back to the kitchen . is mary in the kitchen ? \textbf{yes}
    \end{tcolorbox}  
\end{tcbraster}


\begin{tcbraster}[raster columns=2, size=minimal, raster valign=top]  
    \begin{tcolorbox}[mybox, title=Task 11: Basic conference]  
    \tiny  
    daniel journeyed to the garden . afterwards he journeyed to the kitchen . where is daniel ? \textbf{kitchen} sandra went back to the kitchen . then she moved to the office . where is daniel ? \textbf{kitchen} john moved to the office . afterwards he journeyed to the bedroom . where is john ? \textbf{bedroom} daniel journeyed to the office . after that he moved to the hallway . where is john ? \textbf{bedroom} sandra went to the hallway . following that she journeyed to the bathroom . where is daniel ? \textbf{hallway } 
    \end{tcolorbox}  
    \begin{tcolorbox}[mybox, title=Task 12: Conjugation]  
    \tiny  
    daniel and mary journeyed to the bedroom . mary and sandra moved to the bathroom . where is daniel ? \textbf{bedroom} sandra and john journeyed to the hallway . mary and sandra went back to the bedroom . where is sandra ? \textbf{bedroom} sandra and mary went to the office . sandra and daniel went back to the bathroom . where is sandra ? \textbf{bathroom} mary and daniel went to the bedroom . john and mary travelled to the bathroom . where is john ? \textbf{bathroom} mary and daniel went to the hallway . john and daniel went back to the bedroom . where is daniel ? \textbf{bedroom}
    \end{tcolorbox}  

     \begin{tcolorbox}[mybox, title=Task 13: Compound conference] 
    \tiny
    sandra and daniel journeyed to the bedroom . following that they travelled to the kitchen . where is daniel ? \textbf{kitchen} john and mary travelled to the bathroom . then they went to the hallway . where is john ? \textbf{hallway} daniel and sandra travelled to the office . following that they went to the bathroom . where is sandra ? \textbf{bathroom} daniel and sandra moved to the kitchen . then they went to the bathroom . where is sandra ? \textbf{bathroom} mary and john went to the kitchen . then they went to the bedroom . where is john ?\textbf{ bedroom}
    \end{tcolorbox}  
    \begin{tcolorbox}[mybox, title=Task 14: Time reasoning] 
    \tiny
    bill travelled to the office yesterday . bill moved to the school this morning . fred went back to the kitchen yesterday . julie journeyed to the school yesterday . where was bill before the school ? \textbf{office} this afternoon bill journeyed to the cinema . yesterday mary journeyed to the park . where was bill before the cinema ? \textbf{school} fred journeyed to the bedroom this morning . julie went back to the kitchen this morning . where was bill before the cinema ? \textbf{school} fred travelled to the park this afternoon . 
    \end{tcolorbox}    
    \begin{tcolorbox}[mybox, title=Task 15: Basic deduction] 
    \tiny
    mice are afraid of cats . cats are afraid of wolves . emily is a cat . wolves are afraid of cats . jessica is a mouse . gertrude is a wolf . sheep are afraid of mice . winona is a wolf . what is winona afraid of ? \textbf{cat} what is jessica afraid of ? \textbf{cat} what is jessica afraid of ? cat what is jessica afraid of ? \textbf{cat}
    \end{tcolorbox}  
    \begin{tcolorbox}[mybox, title=Task 16: Basic induction] 
    \tiny
 greg is a swan . bernhard is a rhino . julius is a frog . bernhard is white . brian is a rhino . julius is green . greg is white . brian is green . lily is a swan . what color is lily ? \textbf{white}
    \end{tcolorbox}    
    \begin{tcolorbox}[mybox, title=Task 17: Positional reasoning] 
    \tiny
 the red square is to the right of the triangle . the pink rectangle is below the red square . is the triangle to the left of the pink rectangle ? \textbf{yes} is the triangle to the right of the pink rectangle ? \textbf{no} is the triangle below the pink rectangle ? \textbf{no} is the triangle above the pink rectangle ? \textbf{yes} is the pink rectangle to the right of the triangle ? \textbf{yes} is the pink rectangle below the triangle ? \textbf{yes} is the triangle to the right of the pink rectangle ? \textbf{no} is the pink rectangle to the right of the triangle ? \textbf{yes}
    \end{tcolorbox}  
    \begin{tcolorbox}[mybox, title=Task 18: Size reasoning] 
    \tiny
the chocolate fits inside the container . the box is bigger than the chest . the chest fits inside the container . the box of chocolates fits inside the container . the chest is bigger than the suitcase . is the suitcase bigger than the box ? \textbf{no} does the box fit in the suitcase ? \textbf{no} does the box fit in the suitcase ? \textbf{no} is the suitcase bigger than the box ? \textbf{no} is the suitcase bigger than the box ? \textbf{no}
    \end{tcolorbox}    
    \begin{tcolorbox}[mybox, title=Task 19: Path finding] 
    \tiny
the bathroom is south of the garden . the kitchen is north of the bedroom . the hallway is north of the garden . the bedroom is west of the garden . the office is west of the hallway . how do you go from the hallway to the bedroom ? \textbf{s,w}
    \end{tcolorbox}  
    \begin{tcolorbox}[mybox, title=Task 20: Agent's motivation] 
    \tiny
 yann is tired . where will yann go ? \textbf{bedroom} jason is bored . where will jason go ? \textbf{garden} antoine is bored . where will antoine go ? \textbf{garden} antoine moved to the garden . why did antoine go to the garden ? \textbf{bored} antoine got the football there . why did antoine get the football ? \textbf{bored} sumit is hungry . where will sumit go ? \textbf{kitchen} sumit travelled to the kitchen . why did sumit go to the kitchen ? \textbf{hungry} yann travelled to the bedroom . why did yann go to the bedroom ? \textbf{tired} 
    \end{tcolorbox}  
\end{tcbraster}

\end{table}

\section{Additional Results}
\subsection{Inference Dynamics and Convergence of Implicit Language Models}
In Table \Cref{tab:implicit-dynamics stats}, we present the average number of steps that the implicit language models require to process a sequence length of 2048 from the test split of the \textsc{D-Pile} dataset during inference in simultaneous mode. Moreover, the table also shows the relative error difference of the solutions found by the language models. Notably, the (4+1) configurations achieve a fixed point, despite not being explicitly constrained to do so. Our observations further reveal that the implicit models trained with a full DEQ setup---(24+4) for Mamba2 and (32+4) for Llama---consistently reach fixed points within their training stop thresholds, which are $<0.05$ and below the inference step threshold of four times 24 and 32, respectively.

\begin{table*}[h!]     
\small  
    \centering        
        \caption{Inference dynamics and convergence characteristics of implicit language models across different scales. We show the performance of Mamba2 and Llama models at various parameter scales—130M, 350M, 760M, and 1.3B—when processing sequences of length 2048 from the test split of the Pile dataset. The average number of steps required for convergence during inference in simultaneous mode is detailed alongside the relative error difference of the solutions obtained by the models. Remarkably, the (4+1) configurations reach a fixed point organically, without explicit constraints enforcing this behavior. The table further highlights that implicit models employing a full DEQ setup—(24+4) for Mamba2 and (32+4) for Llama—demonstrate consistent convergence within their predefined stop thresholds, which are less than $0.05$. These thresholds are also below the designated inference step threshold of four times 24 and 32 for the respective models.}   
    \resizebox{0.5\textwidth}{!}{%
        \begin{tabular}{lcccc}   
            \toprule        
             & \textbf{Model} & \textbf{D-Pile Perplexity} & \textbf{Inference Steps} & \textbf{Rel. Diff.}\\      
            \midrule  
\multirow{4}{*}{\rotatebox{90}{130M}} & Mamba2(4+1)  & 13.76 & 14 & 0.035 \\      
& Mamba2(24+4)  & 12.86 & 62 & 0.036 \\
\cmidrule(l){2-5}
& Llama(4+1)  & 12.73 & 13 & 0.014 \\     
&  Llama(32+4)  & 11.73 & 53 & 0.033 \\    
 \midrule    
\multirow{4}{*}{\rotatebox{90}{350M}} & Mamba2(4+1)  & 10.02 & 10 & 0.012 \\   
& Mamba2(24+4)  & 9.70 & 49 & 0.024 \\ 
\cmidrule(l){2-5}
& Llama(4+1)  & 9.66 & 12 & 0.015 \\  
&  Llama(32+4)  & 9.43 & 57 & 0.037 \\    
            \midrule  
\multirow{4}{*}{\rotatebox{90}{760M}} & Mamba2(4+1)  & 8.60 & 10 & 0.013 \\      
& Mamba2(24+4)  & 8.35 & 45 & 0.025 \\ 
\cmidrule(l){2-5}
& Llama(4+1)  & 8.27 & 12 & 0.014 \\      
&  Llama(32+4)  & 7.90 & 77 & 0.044 \\    
            \midrule  
\multirow{4}{*}{\rotatebox{90}{1.3B}} & Mamba2(4+1)  & 7.97 & 10 & 0.013 \\      
& Mamba2(24+4)  & 7.70 & 47 & 0.029 \\ 
\cmidrule(l){2-5}
& Llama(4+1)  & 7.66 & 13 & 0.015 \\      
&  Llama(32+4)  & 7.24 & 69 & 0.048 \\    
            \bottomrule        
        \end{tabular}        
    }    
   
    \label{tab:implicit-dynamics stats}      
\end{table*}  

\subsection{Length Extrapolation Capabilities in Pretrained Models}
We evaluated the ability of our models to extrapolate to longer sequence lengths and compared their performance with baseline models. All our in-house trained models, including the Mamba2 and Llama baselines, were initially trained on sequences of 2048 tokens and subsequently tested on sequences of 4096, 8192, and up to 16384 tokens. Table \ref{tab:all-lengths-ppls} presents the average perplexities across different model scales. We note that the original Mamba2 models, denoted as Mamba2$^*$, were trained on sequence lengths of 8192. Our observations indicate that in all instances, our implicit models, including the Mamba2 (4+1), and Mamba2 (24+4) as well as the Llama(32+4) configuration, maintain lower perplexities compared to the explicit Mamba2 and Llama respectively. We also found that the difference in perplexity between the longer and shorter sequences becomes more pronounced as the size of the models increases.

\begin{table*}[h!]   
\small
    \centering   
        \caption{Length extrapolation performance of pretrained models on varying sequence lengths. We demonstrate the average Pile test perplexity results for Mamba2 (300B tokens) and our in-house trained Mamba2$^*$ (207B tokens) baseline models, as well as our Mamba2(4+1) and Mamba2(24+4) configurations, across a range of sequence lengths---2048, 4096, 8192, and 16384 tokens. Each model was originally trained on sequences of 2048 tokens, while the original Mamba2 was trained on 8192 tokens. The results highlight that our implicit models consistently achieve lower perplexities than their explicit counterparts across all tested sequence lengths. Notably, as model sizes increase, the discrepancy in perplexity between longer and shorter sequences grows more evident.} 
    \resizebox{0.5\textwidth}{!}{%
        \begin{tabular}{lccccc} 
            \toprule      
             & \textbf{Model} & \textbf{2048} & \textbf{4096} & \textbf{8192} & \textbf{16384} \\    
            & & \textbf{ppl$\downarrow$}& \textbf{ppl$\downarrow$} & \textbf{ppl$\downarrow$} & \textbf{ppl$\downarrow$}\\    
            \midrule
\multirow{3}{*}{\rotatebox{90}{130M}} & Mamba2 &13.72 & 13.44 & 13.92 & 14.91 \\  
& Mamba2$^*$ & 13.05&12.72 & 12.76& 12.94\\  
& Mamba2(4+1)-ours &13.76 & 13.04 & 13.06 & 13.25 \\ 
& Mamba2(24+4)-ours & \textbf{12.86}& \textbf{12.53} & \textbf{12.55} & \textbf{12.78} \\  
            \midrule
\multirow{3}{*}{\rotatebox{90}{370M}} & Mamba2 & 10.55& 10.28 & 11.20 & 13.89 \\  
& Mamba2$^*$ & 10.18& 9.94 & 10.07 & 10.34 \\  
& Mamba2(4+1)-ours & 10.02& 9.74 & 9.73 & 9.86 \\ 
& Mamba2(24+4)-ours &\textbf{9.70} & \textbf{9.45} & \textbf{9.44 }& \textbf{9.59} \\  
            \midrule
\multirow{3}{*}{\rotatebox{90}{760M}} & Mamba2 &9.23 & 9.45 & 34.99 & 231.25 \\  
& Mamba2$^*$ &8.98 & 8.94 & 10.38 & 12.56 \\  
& Mamba2(4+1)-ours &8.60 & 8.38 & 8.40 &  8.55\\ 
& Mamba2(24+4)-ours &\textbf{8.35 }& \textbf{8.16} & \textbf{8.18} & \textbf{8.33} \\  

        \midrule


\multirow{3}{*}{\rotatebox{90}{1.3B}} & Mamba2 & 8.40& 8.47 & 12.69 & 25.37 \\  
& Mamba2$^*$ &8.28 & 8.24 & 9.96 & 15.06 \\ 
& Mamba2(4+1)-ours & 7.97& 7.81 & 8.36 & 9.53 \\ 
& Mamba2(24+4)-ours & \textbf{7.70}& \textbf{7.59} & \textbf{8.19} & \textbf{9.63} \\

            \bottomrule      
        \end{tabular}      
    }  
   
    \label{tab:all-lengths-ppls}    
\end{table*}

\subsection{Effective Duality between Simultaneous Mode and Sequential Mode in SSMs}

Empirically, we demonstrate that once implicit state-space models (SSMs) are trained in a simultaneous mode (parallelizable in token dimension), these trained models can be utilized in sequential mode. In sequential mode, self-iteration occurs for each token, thereby affirming the duality of the two modes. We employ the 1.3B Mamba2 (24+4) model and 1.3B Llama (32+4) model, trained on 207B tokens from the \textsc{D-Pile}, to evaluate the duality between the two modes using examples from the test split of the \textsc{D-Pile}. \Cref{tab:simul_seq_examples} presents some detokenized outputs of the model in both modes.

\begin{table}[ht]  
\centering  
\caption{Selected examples from the test split of the D-Pile dataset, showcasing the detokenized output of the next-token predictions by the 1.3B implicit Mamba2 model, which was trained in a parallel fashion and tested in both Simultaneous and Sequential modes. These outputs highlight the duality of the two approaches.} 
\begin{tabular}{p{0.3\linewidth}|p{0.3\linewidth}|p{0.3\linewidth}}  
\hline  
\textbf{\textcolor{black}{Ground Truth}}&
\textbf{\textcolor{gray}{Simultaneous}} & \textbf{\textcolor{brown}{Sequential}} \\  
\hline  
    labour productivity and output would rise as a result • it is essential to protect the professionalism of certain categories of workers: the debates here centred on performance artists and female theatrical employees engaged in highly physical and intensely emotional work • heavy physical labour and strenuous exercise can lead to disruptions of the menstrual cycle • women’s physical and intellectual capacities are reduced during menstruation; women lose muscular strength and powers of concentration • women’s biological constitution and reproductive functions require specific recognition in law  Against the provision: • employers are less likely to appoint women if they are guaranteed paid time off work during menstruation &
\textcolor{gray}{, market. the. be. a result.  The would a to ensure the interests and of the professions of workers   professions on arered on the management, the artists performers  in the skilled work demanding competitive work • the industry work is therenuous physical are be to injuryions in the body cycle and• the’s bodies and emotional capacities are not by pregnancyation  this are their strength and stam of endurance • menstru’s menstrual clocks is menstrual capacity are special protection  the •Thest this background of • the should not likely to be women to they are not the matern off for for menstruation • womenin un the commentators) }&
\textcolor{brown}{. market. the. be. a result of  The would a to ensure the environment and of the professions of workers   professions on arered on the management, the artists performers  in the skilled work demanding demanding work • the industry work is therenuous physical are be to injuryions in the body cycle and• the’s bodies and emotional capacities are not by pregnancyation  this are their strength and stam of endurance • menstru’s menstrual clocks is menstrual capacity are special protection  the •Thest this background of • the should not likely to be women to they are not the matern off for for menstruation • womenin un the commentators)}

\\  
\hline  
  form of "photon counting"..  "This de-excitation is called ‘fluorescence’, and it is characterized by a lifetime of a few nanoseconds of the lowest vibrational level of the first excited state. De-excitation from the excited singlet state to the ground state also occurs by other mechanisms, such as non-radiant thermal decay or ‘phosphorescence’. In the latter case, the chromophore undergoes a forbidden transition from the excited singlet state into the triplet state (intersystem crossing, ISC, Fig 2.4), which has a non-zero probability, for example because of spin orbit coupling of the electrons’ magnetic moments"  its a type of INTERSYSTEM CROSSING  doing a search for Intersystem crossing, memristor brings up this link..  A composite optical microcavity, in which nitrogen vacancy (NV) centers in a diamond nanopillar  &
 \textcolor{gray}{meaning of thethe"" is  IThe isceptfactciting of a thephotonorescence’ and it is the by the photonphotonetime of the few hundredoseconds. the excited- level of the excited   of  -excitation is the first state state is the ground state is occurs,  ,  as -radiative  , fluorescence’  the case case, the excitedophore is a non transition to the ground singlet state to the ground state,‘ystem crossing), ISC), or.).1). which is a lifetime-ne probability of but example, of the- coupling to the excited to spin moment.  " a bit of fluorescenceC crossingOSSING, " a google on "tersystem crossing I Ieor, up a    " mem material devicecavity is consisting which the- (NV) centers are diamond diamond crystalillar are coupled to aing gallery modes ( a silicon microsphere, is fabricated.} &
 
\textcolor{brown}{meaning of theto"" is  IThe isceptfactciting of a thephotonorescence’ and it is the by the photonphotonetime of the few hundredoseconds. the excited energy state of the excited   of  -excitation is the first state state is the ground state is occurs,  ,  as -radiative  , fluorescence’  the case case, the excitedophore is a non transition to the ground singlet state to the ground state,‘ystem crossing), ISC), or.).1). which is a lifetime-ne probability of but example, of the- coupling to the excited to spin moments.  " a bit of fluorescenceC crossingOSSING, " a google on "tersystem crossing, Ieor, up a    " mem material devicecavity is consisting which the- (NV) centers are diamond diamond crystalillar are coupled to aing gallery modes ( a silicon microsphere, is fabricated.}
\\  
\hline  

\end{tabular}  
\label{tab:simul_seq_examples}  
\end{table} 

\clearpage






\end{document}